
\documentclass[DTMColor]{ROB-New}

\usepackage[linesnumbered,ruled,vlined]{algorithm2e}
\SetKwComment{Comment}{/* }{ */}
\RestyleAlgo{ruled}

\usepackage{caption}
\usepackage{subcaption}
\usepackage{enumitem}

\usepackage{array}
\newcommand{\PreserveBackslash}[1]{\let\temp=\\#1\let\\=\temp}
\newcolumntype{C}[1]{>{\PreserveBackslash\centering}p{#1}}

\begin{document}

\authormark{Duy Nam Bui and Manh Duong Phung}

\articletype{RESEARCH ARTICLE}


\title{Radial Basis Function Neural Networks for Formation Control of Unmanned Aerial Vehicles}

\author[1]{Duy-Nam Bui}
\address[1]{Vietnam National University, Hanoi, Vietnam}

\author[2]{Manh Duong Phung\hyperlink{corr}{*}}
\address[2]{Fulbright University Vietnam, Ho Chi Minh City, Vietnam}
\address{\hypertarget{corr}{*}Corresponding author. \email{duong.phung@fulbright.edu.vn}}


\keywords{Unmanned aerial vehicles (UAVs), formation control, radial basis function neural network, backstepping sliding mode control}

\abstract{ This paper addresses the problem of controlling multiple unmanned aerial vehicles (UAVs) cooperating in a formation to carry out a complex task such as surface inspection. We first use the virtual leader-follower model to determine the topology and trajectory of the formation. A double-loop control system combining backstepping and sliding mode control techniques is then designed for the UAVs to track the trajectory. A radial basis function neural network (RBFNN) capable of estimating external disturbances is developed to enhance the robustness of the controller. The stability of the controller is proven by using the Lyapunov theorem. A number of comparisons and software-in-the-loop (SIL) tests have been conducted to evaluate the performance of the proposed controller. The results show that our controller not only outperforms other state-of-the-art controllers but is also sufficient for complex tasks of UAVs such as collecting surface data for inspection. The source code of our controller can be found at {\fontfamily{pcr}\selectfont \url{https://github.com/duynamrcv/rbf_bsmc}}.
}

\maketitle

\section{Introduction}
Unmanned aerial vehicles, when combined with computer vision technologies, can collect visual data of structures to provide valuable information for various tasks such as inspecting structural surfaces \cite{zeng_zhong2023,rizia_reyes2022},  reconstructing 3D models \cite{INZERILLO2018457, ZHAO2021103832}, identifying cracks \cite{CHEN2021102913,PENG2021123896}, and detecting corrosion and rust on steel bridges \cite{la_dinh__2019,TIAN2022104043}. However, using a single UAV for these tasks is inefficient due to the large size of the structures and the limited battery capacity of the UAV. A group of UAVs flying in a formation can be used to overcome those limitations \cite{9341089,9384182,8593930,OH2015424}. The formation allows the UAVs to perform collaborative inspection to increase the efficiency and accuracy of data collection. It also allows for safe operation, as the formation control can prevent collision among the UAVs.

In formation control, the leader-follower approach is commonly used to provide flexibility in topology and trajectory selection\cite{Liu2018,8756125}. In the standard leader-follower method, one UAV is assigned as the leader, and the others are followers. The leader plays the role of a reference node for the followers to determine their locations to form the desired topology. The limitation of this approach, however, is the dependence of the system on the leader. If the leader is malfunctioning, the whole system will fail. The virtual leader-follower model can be used to cope with this problem. In this approach, the leader is purely a virtual entity, serving as a reference point for the followers to determine their positions \cite{ZHENG2021389}. By decoupling the physical leader from the model, this method mitigates the risk of complete system failure.

In the leader-follower model, linear controllers are commonly used to control individual UAVs to form the desired topology  \cite{ZHENG2021389,Rinaldi2013,doi:10.1177/0278364909104290}. In \cite{Rinaldi2013}, linear quadratic and neural networks-based controllers are combined to control a group of UAVs considering their full dynamics. In \cite{doi:10.1177/0278364909104290}, the receding horizon control is employed to yield a fast convergence rate of the formation tracking control. This controller also considers the orientation between the leader and the followers for accurate formation. The decentralized $H_\infty$-PID controller is introduced in \cite{9908553} to maneuver a group of UAVs to deal with the external disturbance and trailing vortex coupling from their neighbor UAVs. A leader-follower formation control technique is presented in \mbox{\cite{Chen2023}} to address issues related to backward error and suboptimal dynamic speed tracking in PID neural network control. Linear controllers, however, have limitations in handling constraints and parameter variation, especially when applied to nonlinear systems like UAVs.

In another approach, nonlinear controllers have been used for formation control \cite{Fahimi2008,KeymasiKhalaji2019,DEHGHANI2016318}. In \cite{4601469}, first and second-order sliding-mode controllers are deployed to assure the asymptotic stability of the formation, taking into account modeling uncertainties. In \cite{7904763}, an adaptive controller using the dynamic estimation of the distance between the leader and the followers is introduced to address uncertainties related to positioning errors. Two finite-time observers are used in \cite{HUANG20204034} to deal with bounded external disturbance force and torque. In \mbox{\cite{9707476}}, a non-uniform vector field that dynamically varies in magnitude and direction is employed to deal with the influence of wind in UAV formation control. A distributed model predictive control algorithm is introduced in \mbox{\cite{10064191}} to coordinate the operation of a fleet of UAVs considering their spatial kinematics and unidirectional data transmissions. However, the convergence of these controllers depends on the characteristics of disturbances, which are hard to model due to their varying nature. A sufficient approach would be utilizing neural networks such as the radial basis function neural network (RBFNN) to estimate disturbances and use it as the feedback for control \cite{YANG2021243,SHOJAEI2016372,8651430, 6796390}.


In this work, we present a new controller for a group of UAVs cooperating in a formation. The UAVs use the virtual leader-follower model to determine their trajectory and form the desired topology. The controller is developed using the backstepping and sliding mode control techniques. An RBFNN is then introduced to estimate external disturbances for better control performance. Our contributions in this work are as follows:

\begin{enumerate}[label=\roman*.]
    \item The proposal of a new controller for UAV formation that is constructed by combining backstepping and sliding mode control techniques, thereby enabling the elimination of nonlinear components and enhancing system robustness. Additionally, the adverse effects associated with these controllers, such as ``explosion of term'' and ``chattering'', are mitigated through the approximation of unknown factors by the neural network. As the result, the developed controller not only addresses the drawbacks of the aforementioned techniques but also augments the adaptability of the UAV system.
    \item The design of a radial basis function neural network (RBFNN) that is capable of estimating external disturbances to compensate for input force control signals, thereby enabling the controller to maintain the required control quality.
    \item The derivation of the stability proof for the designed controller using Lyapunov’s theorem, which is essential to ensure stable operation of the UAVs under conditions affected by external forces.
    \item The comparison of the proposed controller with other popular methods including model predictive control (MPC), backstepping sliding mode control (BSMC) and sliding mode control (SMC) in different scenarios to confirm its superior performance. Software-in-the-loop tests were also conducted with a cooperative bridge inspection task to verify the validity of the proposed method for practical applications.
\end{enumerate}

The rest of this paper is structured as follows. Section \ref{system} presents the dynamic and formation models of the UAVs. Section \ref{controller} introduces the proposed controller. Section \ref{result} shows evaluation results. The paper ends with conclusions described in \ref{con}.
\section{Problem formulation}\label{system}
\begin{figure}
    \centering
    \includegraphics[width=0.45\textwidth]{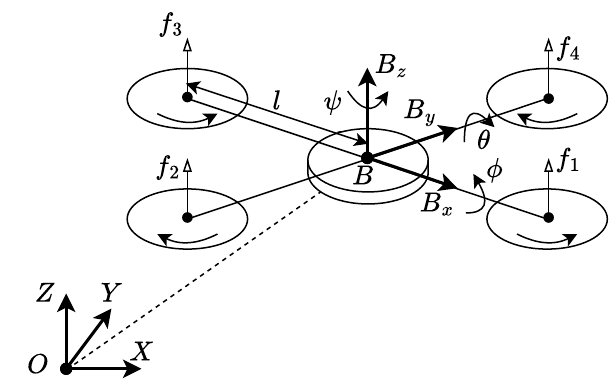}
    \caption{The structure of the quadrotor UAV in the global frame}
    \label{fig:model}
\end{figure}
To control a group of UAVs, we first consider their dynamic model and formation topology with details as follows.

\subsection{UAV dynamic model}

Consider a group of $n$ UAVs, each is a quadrotor with two pairs of propellers rotating in opposite directions, as described in Figure \ref{fig:model}. Frames $BB_xB_yB_z$ and $OXYZ$ are respectively the body-fixed and inertial frames. We use Euler angles to represent the attitude of the UAV. The configuration of the UAV includes its position $\xi=\left[x,y,z\right]^T$ and Euler angles $\Xi=\left[\phi,\theta,\psi\right]^T$, with $\left\vert\phi\right\vert\leq\pi/2$, $\left\vert\theta\right\vert\leq\pi/2$ and $\left\vert\psi\right\vert\leq\pi$. Those angles represent the roll, pitch and yaw orientation of the UAV, respectively. Control signals of the UAV are defined as follows:
\begin{equation}
    \left[\begin{array}{c}
    f_{t}\\
    \tau_{\phi }\\
    \tau_{\theta }\\
    \tau_{\psi }
    \end{array}\right]=\left[\begin{array}{c}
    f_{1}+f_{2}+f_{3}+f_{4}\\
    l\left(f_{4}-f_{2}\right)\\
    l\left(f_{3}-f_{1}\right)\\
    \tau_{2}+\tau_{4}-\tau_{1}-\tau_{3}
    \end{array}\right],
\end{equation}
where $l$ is the arm length; $f_{t}$ is the total thrust of four propellers; $\tau_\phi$, $\tau_\theta$, $\tau_\psi$ are the torques in three axes; and $f_i$ and $\tau_i$, with $i=\left\{1,2,3,4\right\}$, are the forces and torques generated by four propellers, respectively. According to \cite{Furrer2016}, the dynamic model of the UAV is described as follows:

\begin{equation}
    \begin{aligned}
        \ddot{x}&=\left(\cos\phi\sin\theta\cos\psi+\sin\phi\sin\psi\right)\dfrac{f_{t}}{m}+\dfrac{d_x}{m}\\
        \ddot{y}&=\left(\cos\phi\sin\theta\sin\psi-\sin\phi\cos\psi\right)\dfrac{f_{t}}{m}+\dfrac{d_y}{m}\\
        \ddot{z}&=\cos\phi\cos\theta\dfrac{f_{t}}{m}-g+\dfrac{d_z}{m}\\
        \ddot{\phi}&=\dfrac{\dot{\theta}\dot{\psi}\left(I_{y}-I_{z}\right)+\tau_{\phi}}{I_{x}}\\
        \ddot{\theta}&=\dfrac{\dot{\phi}\dot{\psi}\left(I_{z}-I_{x}\right)+\tau_{\theta}}{I_{y}}\\
        \ddot{\psi}&=\dfrac{\dot{\phi}\dot{\theta}\left(I_{x}-I_{y}\right)+\tau_{\psi}}{I_{z}}
    \end{aligned}
    \label{eqn:dynamic}
\end{equation}
where $I_x$, $I_y$, $I_z$ are the moments of inertia, $m$ is the mass of the UAV, $g$ is the gravitational acceleration, and $\left[d_x,d_y,d_z\right]^T$ is the disturbance caused by factors such as wind or turbulent flows.

\subsection{UAV formation model}
The formation model used in this work is the virtual leader-follower model with two main components:
\begin{itemize}
    \item Virtual leader: a virtual leader is a non-physical UAV used as a reference for other UAVs to determine their position. Its trajectory represents the trajectory of the UAV group. 
    \item Follower: a follower is a UAV that adjusts its position based on the virtual leader. Given the reference trajectory of the leader and the expected topology, the followers calculate their trajectories and then track them to form the desired formation.
\end{itemize}

\begin{figure}
    \centering
    \includegraphics[width=0.45\textwidth]{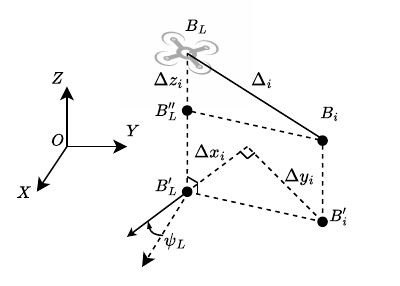}
    \caption{Illustration of the virtual leader-follower formation structure}
    \label{fig:formation}
\end{figure}

Consider virtual leader $B_L$ having position $\xi_L = \left[x_L,y_L,z_L\right]^T$ and heading angle $\psi_L$ and follower $B_i$ having position $\xi_L = \left[x_{i},y_{i},z_{i}\right]^T$ and yaw angle $\psi_{i}$. Let $B'_L$ and $B'_i$ be their projection on the $OXY$ plane, respectively, $B''_L$ be the projection of $B_i$ on $B_LB'_L$, and $\Delta_i=\left[\Delta x_{i}, \Delta y_{i}, \Delta z_{i}\right]^T$ be the desired distance between follower $B_i$ and the virtual leader, as depicted in Figure \ref{fig:formation}. Since $\Delta z_i=B_LB''_L$, the desired position of follower $B_i$ can be computed as:
\begin{equation}
    \begin{aligned}
        {}^d\xi_i&=\text{Rot}_z\left(\psi_L\right)\Delta_i+\xi_L\\
        {}^d\psi_{i}&=\psi_L,
    \end{aligned}
    \label{eqn:ref}
\end{equation}
where {$\text{Rot}_z(\cdot)\in\mathbb{R}^{3\times3}$ is the rotation matrix around z-axis}. Equation \eqref{eqn:ref} allows the followers to compute their trajectory based on the trajectory of the virtual leader and the desired formation topology.

\section{Controller design for UAV formation} \label{controller}
\begin{figure}
    \centering    \includegraphics[width=0.8\textwidth]{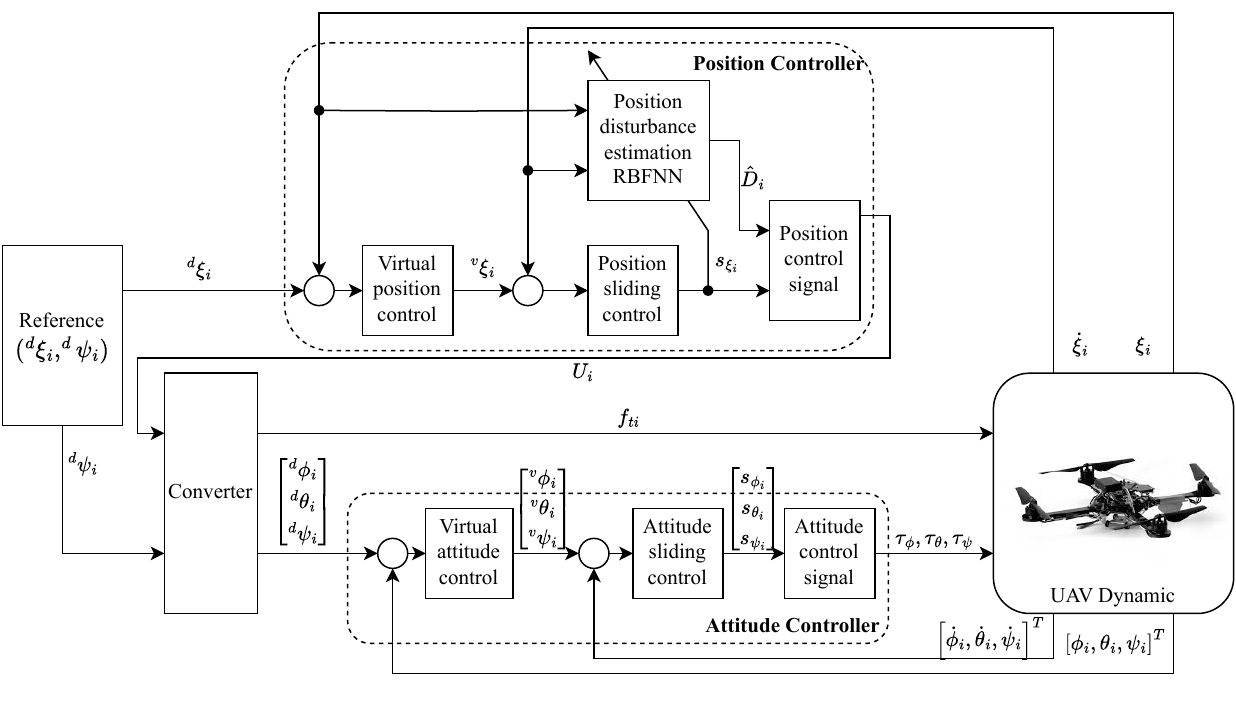}
    \caption{The proposed controller}
    \label{fig:sys}
\end{figure}

Given the trajectory of the virtual leader, denoted as $(\xi_L,\psi_L)$, the desired trajectory of follower $i$, $({}^d\xi_{i}, {}^d\psi_{i})$, in the formation can be computed based on \eqref{eqn:ref}. To track this trajectory, we design a dual-loop control system for each follower as shown in Figure \ref{fig:sys}. The outer loop is a position controller that regulates the altitude and horizontal position, while the inner loop is a backstepping sliding mode controller (BSMC) that handles the UAV's attitude, including its roll, pitch, and yaw angles. To account for external disturbances, the position controller was designed with a radial basis function neural network (RBFNN). A converter block is also included to convert the desired translational control forces into roll and pitch angles. Details of each controller are described as follows.

\subsection{Position controller design} \label{sec:outer}
The position controller aims to keep the UAV's position aligned with the desired trajectory. It is designed based on BSMC with the use of RBFNN for disturbance estimation. According to \eqref{eqn:dynamic}, dynamic equations for the translational motion of UAV $i$ are given as follows:
\begin{equation}
    \begin{aligned}
       \ddot{x}_i&=u_{xi}+\dfrac{d_{xi}}{m},\\
        \ddot{y}_i&=u_{yi}+\dfrac{d_{yi}}{m},\\
        \ddot{z}_i&=u_{zi}+\dfrac{d_{zi}}{m},
    \end{aligned}
    \label{eqn:transeqfull}
\end{equation}
where
\begin{equation}
    \begin{aligned}
        u_{xi}&=\left(\cos\phi_i\sin\theta_i\cos\psi_i+\sin\phi_i\sin\psi_i\right)\dfrac{f_{ti}}{m},\\
        u_{yi}&=\left(\cos\phi_i\sin\theta_i\sin\psi_i-\sin\phi_i\cos\psi_i\right)\dfrac{f_{ti}}{m},\\
        u_{zi}&=\cos\phi_i\cos\theta_i\dfrac{f_{ti}}{m}-g.
    \end{aligned}
    \label{eqn:trans_force}
\end{equation} 

Let $\xi_i=\left[x_i,y_i,z_i\right]^T$ and $\dot{\xi}_i=\left[\dot{x}_i,\dot{y}_i,\dot{z}_i\right]^T$ respectively be the position and velocity of the translational motion, $U_i=\left[u_{xi},u_{yi},u_{zi}\right]^T$ be the control signal, and $D_i=\left[\dfrac{d_{xi}}{m},\dfrac{d_{yi}}{m},\dfrac{d_{zi}}{m}\right]^T$,  with $\left\Vert D_i\right\Vert\leq \bar{d}$, be the external disturbance affecting UAV $i$. Equation \eqref{eqn:transeqfull} can be rewritten as:
\begin{equation}
        \ddot{\xi}_i=U_i+D_i.
    \label{eqn:transeq}
\end{equation}
The BSMC is then designed as follows.

\subsubsection{Backstepping sliding mode controller (BSMC) design}
Let ${}^e\xi_{i}$ be the translational error, ${}^e\xi_{i}=\xi_{i}-{}^d\xi_i$. The virtual velocity ,${}^v\xi_i$, of the subsystem is designed as:
\begin{equation}
    {}^v\xi_i={}^{d}\dot{\xi}_i-\lambda_{\xi} {}^{e}\xi_i,
    \label{eqn:virtual}
\end{equation}
where $\lambda_{\xi}>0$ is a positive definite gain. The first candidate Lyapunov function is chosen as
\begin{equation}
    {}^1V_{\xi_i} = \dfrac{1}{2}{}^e\xi_i^T{}^e\xi_i.
\end{equation}
Its derivative is given by
\begin{equation}
    {}^1\dot{V}_{\xi_i}={}^e\xi_{i}^T{}^e\dot{\xi}_{i}={}^e\xi_{i}^T\left(\dot{\xi}_{i}-{}^d\dot{\xi}_{i}\right).
    \label{eqn:Vp1}
\end{equation}
Substituting $\dot{\xi}_{i} = {}^v{\xi}_i$ into \eqref{eqn:Vp1} gives
\begin{equation}
    {}^1\dot{V}_{\xi_i}=-\lambda_{\xi} \left\Vert {}^e\xi_{i}\right\Vert^2\leq0.
\end{equation}
Hence, the system is stable with the virtual velocity chosen in \eqref{eqn:virtual}. The sliding mode control (SMC) algorithm is then utilized to design the input control signal for the position system. The sliding surface is chosen as follows:
\begin{equation}
    s_{\xi_i}=\gamma_{\xi} {}^e\xi_{i} + \left(\dot{\xi}_{i}-{}^v{\xi}_{i}\right),
\end{equation}
where $\gamma_{\xi}>0$ is a positive definite gain. Denote $\hat{D}_i$ as the disturbance estimated via an estimator such as the RBFNN in Section {\ref{sec:rbf}}. The derivative of $s_{\xi_i}$ then can be obtained by using $\hat{D}_i$ instead of $D_i$ as follows:
\begin{equation}
    \dot{s}_{\xi_i}=\gamma_{\xi}\left(\dot{\xi}_i-{}^d\dot{\xi}_{i}\right)+U_i+\hat{D}_i-{}^v\dot{\xi}_{i}.
    \label{eqn:sdotphi}
\end{equation}

The second Lyapunov function of the subsystem is chosen as follows:
\begin{equation}
    {}^2V_{\xi_i}=\dfrac{1}{2}s_{\xi_i}^Ts_{\xi_i}
\end{equation}

The control signals are designed as follows:

\begin{equation}
    \begin{aligned}
        U_{ieq}&={}^v\dot{\xi}_{i}-\gamma_{\xi}\left(\dot{\xi}_{i}-{}^d\dot{\xi}_{i}\right)-\hat{D}_i\\
        U_{isw}&=-\left(c_{\xi1}\text{sg}\left(s_{\xi_i}\right)+c_{\xi2}s_{\xi_i}\right),
    \end{aligned}
    \label{eqn:up}
\end{equation}
where $c_{\xi1}$ and $c_{\xi2}$ are positive gains, $U_{ieq}$ is the equivalent control signal that maintains the position variables on the sliding manifold, $U_{isw}$ is the signal that leads the subsystem to the sliding surface $s_{\xi_i}$, and $\text{sg}\left(\cdot\right)$ is the piece-wise continuous function defined as
\begin{equation}
    \text{sg}\left(x\right)=\left\{ \begin{array}{c}
    1\\
    -1\\
    \dfrac{x}{\epsilon}
    \end{array}\right.\begin{array}{c}
    x>\epsilon\\
    x<-\epsilon\\
    \text{otherwise}
    \end{array}
\end{equation}
where $0<\epsilon<1$ is a pre-defined constant.

\begin{theorem}
Consider the position control system of the UAV. If the control signal is chosen as
\begin{equation}
    U_i=U_{ieq}+U_{isw},
    \label{eqn:up0}
\end{equation}
 the system is stable. 
\end{theorem}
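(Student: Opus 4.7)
The natural approach is a direct Lyapunov argument on the sliding-surface candidate ${}^2V_{\xi_i}=\tfrac{1}{2}s_{\xi_i}^T s_{\xi_i}$ already introduced just before the statement. First I would differentiate ${}^2V_{\xi_i}$ along the closed-loop trajectories to obtain ${}^2\dot{V}_{\xi_i}=s_{\xi_i}^T\dot{s}_{\xi_i}$, and then substitute the expression for $\dot{s}_{\xi_i}$ given in \eqref{eqn:sdotphi} with $U_i=U_{ieq}+U_{isw}$ from \eqref{eqn:up0}.

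Using the definition of $U_{ieq}$ in \eqref{eqn:up}, the terms $\gamma_\xi(\dot{\xi}_i-{}^d\dot{\xi}_i)$, ${}^v\dot{\xi}_i$, and $\hat{D}_i$ should cancel pairwise, collapsing $\dot{s}_{\xi_i}$ to the switching part $U_{isw}=-\bigl(c_{\xi 1}\text{sg}(s_{\xi_i})+c_{\xi 2}s_{\xi_i}\bigr)$. Substituting back yields
\begin{equation*}
{}^2\dot{V}_{\xi_i}= -c_{\xi 1}\,s_{\xi_i}^T\text{sg}(s_{\xi_i}) - c_{\xi 2}\,\lVert s_{\xi_i}\rVert^2.
\end{equation*}
The second term is manifestly non-positive. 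For the first term I would split by the three branches of $\text{sg}(\cdot)$: on the outer branches each component contributes $c_{\xi 1}|s_{\xi_i,k}|\ge 0$; inside the boundary layer $|s_{\xi_i,k}|\le\epsilon$ the contribution is $c_{\xi 1}s_{\xi_i,k}^2/\epsilon\ge 0$. Either way ${}^2\dot{V}_{\xi_i}\le 0$, which gives stability of the sliding-surface dynamics in the sense of Lyapunov; chaining with the first Lyapunov step that showed ${}^e\xi_i\to 0$ whenever $s_{\xi_i}\to 0$ completes the tracking claim.

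The step I expect to require the most care is the silent replacement of the true disturbance $D_i$ by its estimate $\hat{D}_i$ in \eqref{eqn:sdotphi}. Read strictly, this only works when the estimation error $\tilde{D}_i=D_i-\hat{D}_i$ vanishes. To make the argument robust I would instead carry $\tilde{D}_i$ through, producing an extra term $s_{\xi_i}^T\tilde{D}_i$ in ${}^2\dot{V}_{\xi_i}$, bound it by $\lVert s_{\xi_i}\rVert\,\lVert\tilde{D}_i\rVert$ using the disturbance bound $\bar d$, and absorb it by choosing $c_{\xi 1}>\lVert\tilde{D}_i\rVert$ (or enlarging $c_{\xi 2}$ outside the boundary layer). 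That last inequality is the real obstacle, since it depends on an approximation bound for the RBFNN that has not yet been stated at this point in the paper, and it also means the guarantee on the boundary-layer branch is only uniform ultimate boundedness rather than asymptotic stability.
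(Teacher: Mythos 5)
Your proposal matches the paper's own proof: the same candidate ${}^2V_{\xi_i}=\tfrac{1}{2}s_{\xi_i}^T s_{\xi_i}$, the same substitution of \eqref{eqn:sdotphi} and \eqref{eqn:up}--\eqref{eqn:up0}, and the same conclusion ${}^2\dot{V}_{\xi_i}=-c_{\xi1}s_{\xi_i}^T\text{sg}(s_{\xi_i})-c_{\xi2}\lVert s_{\xi_i}\rVert^2\leq 0$. The caveat you raise about silently using $\hat{D}_i$ in place of $D_i$ is well taken, but the paper makes the same silent replacement here and defers the treatment of the estimation error $\tilde{D}_i$ to Theorem \ref{theorem2}, where the RBFNN approximation error and weight bound enter the Lyapunov analysis explicitly.
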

\begin{proof}
Taking the first derivative of ${}^2V_{\xi_i}$ gives
\begin{equation}
    {}^2\dot{V}_{\xi_i}=s_{\xi_i}^T\dot{s}_{\xi_i}.
    \label{eqn:Vdotp20}
\end{equation}
Substituting \eqref{eqn:sdotphi} into \eqref{eqn:Vdotp20} gives
\begin{equation}
    {}^2\dot{V}_{\xi_i}=s_{\xi_i}^T\left(\gamma_{\xi}\left(\dot{\xi}_{i}-{}^d\dot{\xi}_{i}\right)+U_i+\hat{D}_i-{}^v\dot{\xi}_{i}\right).
    \label{eqn:Vdotp2}
\end{equation}
By substituting \eqref{eqn:up} and \eqref{eqn:up0} into \eqref{eqn:Vdotp2}, $\dot{V}_{\xi_i}$ becomes

\begin{equation}
    {}^2\dot{V}_{\xi_i}=-c_{\xi1}s_{\xi_i}^T\text{sg}\left(s_{\xi_i}\right)-c_{\xi2}\left\Vert s_{\xi_i}\right\Vert^2\leq0.    
\end{equation}
According to Lyapunov's stability theorem, the system is stable.
\end{proof}
\subsubsection{Radial basis function neural network (RBFNN) design} \label{sec:rbf}
\begin{figure}
    \centering
    \includegraphics[width=0.45\textwidth]{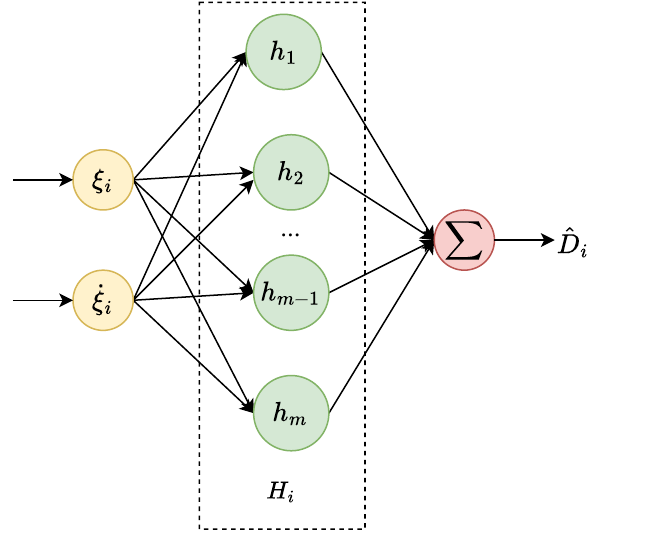}
    \caption{The RBFNN structure}
    \label{fig:rbfnn}
\end{figure}

During operation, UAVs are subject to inevitable disturbances such as wind or turbulent flows. Those disturbances affect the system performance, but are complex to model and analyze. We address this problem by exploiting the online learning capability of neural networks to estimate the disturbances. Previous studies on universal approximation theorems for RBFNN show that RBFNN can approximate any nonlinear function on a bounded set with an arbitrary level of accuracy \cite{6796390}. In this work, we design a disturbance estimator using a neural network with the radial basis function (RBF). The network has three layers including an input layer, a hidden layer, and an output layer, as shown in Figure \ref{fig:rbfnn}. Position vector $\xi_i$ and its derivation $\dot{\xi}_i$ are the input of the network. At the hidden layer, neurons are activated by a radial basis function. The output of neuron $j$ is computed as:
\begin{equation}
    h_{j}=\exp{\left(-\dfrac{\left\Vert \xi_{i}-\mu_{1j}\right\Vert^2+\left\Vert \dot{\xi}_{i}-\mu_{2j}\right\Vert^2}{b^2}\right)},
    \label{eqn:h}
\end{equation}
where $b$ is a parameter controlling the width of the Gaussian function, $\mu_{1j}$ and $\mu_{2j}$ are predefined center points, and $j\in\{1,2,...,m\}$ is the neuron index with $m$ being the number of neurons in the hidden layer. The output layer is a weighted sum. Let $W_i$ be the optimal weight matrix, $H_i$ be the output of the hidden layer, and  $\sigma_i$ be the approximation error. Disturbance $D_i$ affecting UAV $i$ then can be expressed by:
\begin{equation} 
    D_i=W_i^TH_i+\sigma_i
    \label{eqn:D}
\end{equation}

The output $\hat{D}_i$ of the RBFNN approximates $D_i$ as:
\begin{equation}
    \hat{D}_i=\hat{W}_i^TH_i,
    \label{eqn:hatD}
\end{equation}
where $\hat{W}_i$ is a trained weight matrix. This matrix is updated based on the following rule:
\begin{equation}
    \dot{\hat{W}}_i=a\left(H_is_{\xi_i}^T-\eta\left\Vert s_{\xi_i}\right\Vert\hat{W}_i\right),
    \label{eqn:dotw}
\end{equation}
where $a$ is a positive definite gain matrix. With this structure, the estimation of disturbance $D_i$ can be described as in Algorithm \ref{alg:rbf}, where $\beta\in(0,1)$ is the momentum factor.

\begin{algorithm}
\caption{Pseudocode to estimate disturbances by RBFNN for UAV $i$}\label{alg:rbf}
\tcc{Initialization only one time}
Initialize parameters $m$, $a$, $b$, $\eta$, $\beta$\;
Initialize predefined center points $\mu_{1}$, $\mu_{2}$\;
Create random weight matrix $\hat{W}_i$\;
Initialize stored weight matrices $\hat{W}_{1i}=\hat{W}_{2i}=\hat{W}_{i}$\;

\tcc{Online training}
\ForEach{computation step}
{
    Get the current values of $\xi_i$, $\dot{\xi}_i$, $s_{\xi_i}$\;
    $H_i = \text{zeros}(m,1)$\;
    \For{$j=1$ to $m$}
    {
        Compute sub-hidden layer $h_j$\tcc*[r]{Equation  \ref{eqn:h}}
    }
    Compute the value of $\dot{\hat{W}}_i$\tcc*[r]{Equation  \ref{eqn:dotw}}
    Update the weight matrix $\hat{W}_i = \hat{W}_{1i} + \dot{\hat{W}}_i + \beta(\hat{W}_{1i}-\hat{W}_{2i})$\;
    Update stored weight matrices $\hat{W}_{2i}=\hat{W}_{1i}$ and $\hat{W}_{1i}=\hat{W}_{i}$\;
    Compute the current estimated disturbance $\hat{D}_i$\tcc*[r]{Equation  \ref{eqn:hatD}}
}
\end{algorithm}

\subsubsection{Stability of the position controller} \label{sec:sta}
The stability of this controller is addressed in Theorem \ref{theorem2} as follows.

\begin{theorem}
Consider UAV $i$ affected by external disturbance $D_i$ as described in \eqref{eqn:transeq}, the control signals designed in \eqref{eqn:up}, the bounded weight  $\left\Vert W_i\right\Vert\leq\bar{W}_i$, and the update rule for  RBFNN in \eqref{eqn:dotw}. If the following inequality condition is satisfied:
\begin{equation}
    \left\Vert s_{\xi_i}\right\Vert \geq\dfrac{\bar{\sigma}+\dfrac{1}{4}\eta\bar{W}_i^{2}}{c_{\xi2}},
    \label{eqn:ineq}
\end{equation}
the position control system is stable. 

\label{theorem2}
\end{theorem}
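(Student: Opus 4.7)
The plan is to construct a composite Lyapunov function that simultaneously captures the sliding-surface energy and the RBFNN weight estimation error, then show that its derivative is negative whenever the inequality in the statement holds. Concretely, I would pick
\begin{equation}
V_{\xi_i} = \tfrac{1}{2}s_{\xi_i}^T s_{\xi_i} + \tfrac{1}{2a}\,\mathrm{tr}\!\left(\tilde{W}_i^T \tilde{W}_i\right),
\end{equation}
where $\tilde{W}_i = W_i - \hat{W}_i$ is the weight error (treating $a$ as a positive scalar gain; the matrix case is analogous). This augmentation of the Lyapunov function used in Theorem~1 is the natural way to absorb the estimation dynamics produced by \eqref{eqn:dotw}.

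The next step is to recompute $\dot{s}_{\xi_i}$ using the \emph{true} disturbance $D_i$ rather than the estimate used in \eqref{eqn:sdotphi}, so that the residual $D_i-\hat{D}_i = \tilde{W}_i^T H_i + \sigma_i$ from \eqref{eqn:D}--\eqref{eqn:hatD} appears explicitly. Substituting the control law \eqref{eqn:up}--\eqref{eqn:up0} into $s_{\xi_i}^T\dot{s}_{\xi_i}$, the equivalent-control terms cancel and one is left with
\begin{equation}
s_{\xi_i}^T\dot{s}_{\xi_i} = s_{\xi_i}^T\left(\tilde{W}_i^T H_i + \sigma_i\right) - c_{\xi1}s_{\xi_i}^T\mathrm{sg}(s_{\xi_i}) - c_{\xi2}\lVert s_{\xi_i}\rVert^2.
\end{equation}
Differentiating the weight term and using the update law \eqref{eqn:dotw}, $\dot{\tilde{W}}_i = -\dot{\hat{W}}_i$ contributes $-\mathrm{tr}(\tilde{W}_i^T H_i s_{\xi_i}^T) + \eta\lVert s_{\xi_i}\rVert\,\mathrm{tr}(\tilde{W}_i^T \hat{W}_i)$ to $\dot{V}_{\xi_i}$, and by the cyclic property of the trace the cross term $\mathrm{tr}(\tilde{W}_i^T H_i s_{\xi_i}^T)$ exactly cancels $s_{\xi_i}^T\tilde{W}_i^T H_i$. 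The only surviving sign-indefinite contributions are thus $s_{\xi_i}^T\sigma_i$ and $\eta\lVert s_{\xi_i}\rVert\,\mathrm{tr}(\tilde{W}_i^T\hat{W}_i)$.

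The remaining work is purely algebraic bounding. I would use $\lVert\sigma_i\rVert\le\bar\sigma$ together with Cauchy--Schwarz to get $s_{\xi_i}^T\sigma_i\le\bar\sigma\lVert s_{\xi_i}\rVert$, and $s_{\xi_i}^T\mathrm{sg}(s_{\xi_i})\ge 0$ to discard the $c_{\xi1}$ term. The crucial inequality is the one producing the factor $\tfrac{1}{4}\eta\bar W_i^2$ in the statement: writing $\mathrm{tr}(\tilde W_i^T\hat W_i) = \mathrm{tr}(\tilde W_i^T W_i) - \lVert\tilde W_i\rVert^2$ and completing the square,
\begin{equation}
\mathrm{tr}(\tilde W_i^T\hat W_i) \le \lVert\tilde W_i\rVert\bar W_i - \lVert\tilde W_i\rVert^2 = -\bigl(\lVert\tilde W_i\rVert - \tfrac{1}{2}\bar W_i\bigr)^2 + \tfrac{1}{4}\bar W_i^2 \le \tfrac{1}{4}\bar W_i^2.
\end{equation}
Substituting all of this yields $\dot V_{\xi_i} \le -\lVert s_{\xi_i}\rVert\bigl(c_{\xi2}\lVert s_{\xi_i}\rVert - \bar\sigma - \tfrac{1}{4}\eta\bar W_i^2\bigr)$, which is non-positive precisely under the condition \eqref{eqn:ineq}; invoking Lyapunov's theorem then delivers uniform ultimate boundedness to the ball defined by that inequality, which is the stability claim.

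The main obstacle I anticipate is the weight-error manipulation rather than the sliding-mode portion: one has to be careful with the trace identities (so that the $\tilde W_i^T H_i s_{\xi_i}^T$ terms cancel exactly) and with the sign of the $\sigma$-modification-style term $-\eta\lVert s_{\xi_i}\rVert\hat W_i$ in \eqref{eqn:dotw}, because if one chooses the wrong Lyapunov sign convention the completing-the-square step goes in the wrong direction and the bound $\tfrac{1}{4}\eta\bar W_i^2$ fails to appear.
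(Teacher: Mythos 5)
Your proposal is correct and follows essentially the same route as the paper: the same composite Lyapunov function $\tfrac{1}{2}s_{\xi_i}^Ts_{\xi_i}+\tfrac{1}{2}\mathrm{tr}(\tilde{W}_i^Ta^{-1}\tilde{W}_i)$, the same cancellation of the $\mathrm{tr}(\tilde{W}_i^TH_is_{\xi_i}^T)$ cross term via the update law, and the same Cauchy--Schwarz plus completing-the-square step that produces the $\tfrac{1}{4}\eta\bar{W}_i^{2}$ term. Your closing remark that the conclusion is really uniform ultimate boundedness to the ball defined by \eqref{eqn:ineq} is a more precise reading of the paper's ``stable'' claim, but the argument is the same.
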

\begin{proof}
Choose the candidate Lyapunov function as follows:
\begin{equation}
    V_{\xi_i}={}^2V_{\xi_i}+\dfrac{1}{2}\text{tr}\left(\tilde{W}_i^Ta^{-1}\tilde{W}_i\right),
\end{equation}
where $\tilde{W}_i=W_i-\hat{W}_i$ is the error weight matrix. Taking the first derivative of $V_{\xi_i}$ gives:
\begin{equation}
    \begin{aligned}
        \dot{V}_{\xi_i}&=s_{\xi_i}^{T}\dot{s}_{\xi_i}+\text{tr}\left(\tilde{W}_i^{T}a^{-1}\dot{\tilde{W}_i}\right)\\
        &=-c_{\xi1}s_{\xi_i}^T\text{sg}\left(s_{\xi_i}\right) -c_{\xi2}\left\Vert s_{\xi_i}\right\Vert ^{2}-s_{\xi_i}^{T}\left(\hat{D}_i-D_{i}\right)-\text{tr}\left(\tilde{W}_i^{T}a^{-1}\dot{\hat{W}}_i\right)\\
        &=-c_{\xi1}s_{\xi_i}^T\text{sg}\left(s_{\xi_i}\right) -c_{\xi2}\left\Vert s_{\xi_i}\right\Vert ^{2}+s_{\xi_i}^{T}\sigma_i+s_{\xi_i}^{T}\tilde{W}_i^{T}H_i-\text{tr}\left(\tilde{W}_i^{T}a^{-1}\dot{\hat{W}}_i\right)\\
        &=-c_{\xi1}s_{\xi_i}^T\text{sg}\left(s_{\xi_i}\right) -c_{\xi2}\left\Vert s_{\xi_i}\right\Vert ^{2}+s_{\xi_i}^{T}\sigma_i+\text{tr}\left(-\tilde{W}_i^{T}\left(a^{-1}\dot{\hat{W}}_i-H_is_{\xi_i}^{T}\right)\right)
    \end{aligned}
\end{equation}
With the updated rule of the neural network, $\dot{V}_{\xi_i}$ can be rewritten as follows:
\begin{equation}
    \begin{aligned}
        \dot{V}_{\xi_i}&=-c_{\xi1}s_{\xi_i}^T\text{sg}\left(s_{\xi_i}\right)-c_{\xi2}\left\Vert s_{\xi_i}\right\Vert ^{2}+s_{\xi_i}^{T}\sigma_i+\text{tr}\left(\tilde{W}_i^{T}\eta\left\Vert s_{\xi_i}\right\Vert \hat{w}_i\right)\\
        &=-c_{\xi1}s_{\xi_i}^T\text{sg}\left(s_{\xi_i}\right) -c_{\xi2}\left\Vert s_{\xi_i}\right\Vert ^{2}+s_{\xi_i}^{T}\sigma_i+\eta\left\Vert s_{\xi_i}\right\Vert \text{tr}\left(\tilde{W}_i^{T}\left(W_i-\tilde{W}_i\right)\right)
    \end{aligned}
\end{equation}
According to the Cauchy-Schwarz inequality, the following inequality equation can be satisfied:
\begin{equation}
    \text{tr}\left(\tilde{W}_i^{T}\left(W_i-\tilde{W}_i\right)\right)\leq\left\Vert \tilde{W}_i\right\Vert \left\Vert W_i\right\Vert -\left\Vert \tilde{W}_i\right\Vert ^{2}
\end{equation}
Thus,
\begin{equation}
    \begin{aligned}
        \dot{V}_{\xi_i}&\leq-c_{\xi1}s_{\xi_i}^T\text{sg}\left(s_{\xi_i}\right)-c_{\xi2}\left\Vert s_{\xi_i}\right\Vert ^{2}+\left\Vert s_{\xi_i}\right\Vert \bar{\sigma}+\eta\left\Vert s_{\xi_i}\right\Vert \left(\left\Vert \tilde{W}_i\right\Vert \left\Vert W_i\right\Vert -\left\Vert \tilde{W}_i\right\Vert ^{2}\right)\\
        &\leq-c_{\xi1}s_{\xi_i}^T\text{sg}\left(s_{\xi_i}\right)-c_{\xi2}\left\Vert s_{\xi_i}\right\Vert ^{2}+\left\Vert s_{\xi_i}\right\Vert \bar{\sigma}+\dfrac{1}{4}\eta\left\Vert s_{\xi_i}\right\Vert \bar{W}_i^{2}-\eta\left\Vert s_{\xi_i}\right\Vert \left(\dfrac{1}{2}\left\Vert W_i\right\Vert -\left\Vert \tilde{W}_i\right\Vert \right)^{2}
    \end{aligned}
\end{equation}
Based on an extension of the Lyapunov theorem \cite{NARENDRA1987}, $\left\Vert s_{\xi_i}\right\Vert$ is bounded. Moreover, the control gain $c_{\xi2}$ can be selected large enough so that
\begin{equation}
    \left[{\bar{\sigma}+\eta\bar{W}_i^{2}/{4}}\right]/{c_{\xi2}}\leq b_\xi
\end{equation}
Therefore, with the inequality condition \eqref{eqn:ineq}, $\dot{V}_{p}$ can be rewritten as follows:
\begin{equation}
    \dot{V}_{\xi_i}\leq-c_{\xi1}s_{\xi_i}^T\text{sg}\left(s_{\xi_i}\right)-\eta\left\Vert s_{\xi_i}\right\Vert \left(\dfrac{1}{2}\left\Vert W_i\right\Vert -\left\Vert \tilde{W}_i\right\Vert \right)^{2}\leq 0
\end{equation}
The Lyapunov stability condition is satisfied.
\end{proof}

\subsection{Attitude controller design} \label{sec:inner}
In our system, the position controller is the outer loop of the UAV control system, as depicted in Figure \ref{fig:sys}. Its control signal is then fed to the converter block to calculate the desired angles and translational forces based on \eqref{eqn:trans_force} as:
\begin{equation}
    \begin{aligned}
        {}^{d}\theta_i&=\arctan\left(\dfrac{u_{xi}\cos{}^{d}\psi_i+u_{yi}\sin{}^d\psi_{i}}{u_{zi}+g}\right)\\
        {}^d\phi_{i}&=\arctan\left(\cos{}^d\theta_{i}\dfrac{u_{xi}\sin{}^d\psi_{i}-u_{yi}\cos{}^d\psi_{i}}{u_{zi}+g}\right)\\
        f_{ti}&=\dfrac{u_{zi}+g}{\cos{}^d\phi_{i}\cos{}^d\theta_{i}}
    \end{aligned}
    \label{eqn:converter}
\end{equation}
They are used as the reference for the attitude controller, which is designed based on the BSMC. From \eqref{eqn:dynamic}, the dynamic equation for the roll angle is given by:
\begin{equation}
    \begin{aligned}
        \ddot{\phi}_{i}&=\dfrac{\dot{\theta}_{i}\dot{\psi}_{i}\left(I_{y}-I_{z}\right)+\tau_{\phi_i}}{I_{x}}.
    \end{aligned}
    \label{eqn:sysroll}
\end{equation}
Denote ${}^e\phi_{i} = \phi_i -{}^d\phi_{i}$ as the roll angle error. The virtual velocity, ${}^v\phi_{i}$, is defined as:
\begin{equation}
    {}^v\phi_{i} = {}^d\dot{\phi}_{i} - \lambda_\phi \phi_{1e},
\end{equation}
where $\lambda_\phi>0$ is a positive gain. The first candidate Lyapunov function for subsystem $\phi_{1e}$ is chosen as:
\begin{equation}
    {}^1V_{\phi_i} = \dfrac{1}{2}{}^e\phi_{i}^2.
\end{equation}
Taking the first derivative of ${}^1V_{\phi}$ gives:
\begin{equation}
    {}^1\dot{V}_{\phi_i}={}^e\phi_{i}{}^e\dot{\phi}_{i}={}^e\phi_{i}\left(\dot{\phi}_i-{}^d\dot{\phi}_{i}\right).
    \label{eqn:Vphi1}
\end{equation}
Substituting $\dot{\phi}_i = {}^v\phi_{i}$ into \eqref{eqn:Vphi1} gives 
\begin{equation}
{}^1\dot{V}_{\phi_i}=-\lambda_\phi {}^e\phi_{i}^2\leq0.
\end{equation}
Thus, the Lyapunov stability is guaranteed. The sliding surface of the roll angle subsystem is expressed as:
\begin{equation}
    s_{\phi_i}=\gamma_{\phi} {}^e\phi_{i} + \left(\dot{\phi}_i - {}^e\phi_i\right),
\end{equation}
where $\gamma_\phi>0$ is a positive gain. The first derivative of $s_{\phi_i}$ is given by:
\begin{equation}
    \dot{s}_{\phi_i}=\gamma_\phi\left(\dot{\phi}_i-{}^d\dot{\phi}_{i}\right)+\dfrac{\dot{\theta}_{i}\dot{\psi}_{i}\left(I_{y}-I_{z}\right)+\tau_{\phi_i}}{I_{x}}-{}^v\dot{\phi}_{i}.
    \label{eqn:dotsphi}
\end{equation}
The control signal is then designed with two sub-control signals, ${}^{eq}\tau_{\phi_i}$ and ${}^{sw}\tau_{\phi_i}$. ${}^{eq}\tau_{\phi_i}$ is the equivalent control signal that maintains the roll angle on the sliding manifold and ${}^{sw}\tau_{\phi_i}$ is the signal that leads the subsystem to the sliding surface $s_{\phi_i}$. They are chosen as follows:
\begin{equation}
    \begin{aligned}
        {}^{eq}\tau_{\phi_i}&=I_x\left({}^v\dot{\phi}_{i}-\gamma_\phi\left(\dot{\phi}_i-{}^d\dot{\phi}_{i}\right)\right)-\dot{\theta}_{i}\dot{\psi}_{i}\left(I_{y}-I_{z}\right)\\
        {}^{sw}\tau_{\phi_i}&=-I_x\left(c_{\phi 1}\text{sg}\left(s_{\phi_i}\right)+c_{\phi 2}s_{\phi_i}\right),
    \end{aligned}
    \label{eqn:uphi}
\end{equation}
where $c_{\phi 1}$ and $c_{\phi 2}$ are positive gains.

\begin{theorem}
Consider the roll angle subsystem \eqref{eqn:sysroll}. If the control signal is designed as:
\begin{equation}
    \tau_{\phi_i}={}^{eq}\tau_{\phi_i}+{}^{sw}\tau_{\phi_i},
    \label{eqn:uphi0}
\end{equation}
the roll angle control system is stable.
\end{theorem}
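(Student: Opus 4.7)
The plan is to mirror the Lyapunov argument used in the first theorem (the position-controller stability result), now applied to the scalar roll-angle subsystem. First I would introduce the second candidate Lyapunov function ${}^2V_{\phi_i} = \tfrac{1}{2}\,s_{\phi_i}^2$ associated with the sliding surface $s_{\phi_i}$, and compute its time derivative ${}^2\dot{V}_{\phi_i} = s_{\phi_i}\,\dot{s}_{\phi_i}$.

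Next I would substitute the expression for $\dot{s}_{\phi_i}$ from \eqref{eqn:dotsphi}, which carries the roll dynamics in through the $\tau_{\phi_i}/I_x$ term. Upon plugging in the composite control law $\tau_{\phi_i} = {}^{eq}\tau_{\phi_i} + {}^{sw}\tau_{\phi_i}$ as defined in \eqref{eqn:uphi}, the cross-coupling term $\dot{\theta}_i\dot{\psi}_i(I_y-I_z)/I_x$ is cancelled exactly by the corresponding component of ${}^{eq}\tau_{\phi_i}/I_x$, while the backstepping and virtual-velocity terms $\gamma_\phi(\dot{\phi}_i - {}^d\dot{\phi}_i)$ and $-{}^v\dot{\phi}_i$ cancel by construction of ${}^{eq}\tau_{\phi_i}$. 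What remains comes entirely from the switching contribution ${}^{sw}\tau_{\phi_i}/I_x$, yielding
\[
{}^2\dot{V}_{\phi_i} = -c_{\phi 1}\,s_{\phi_i}\,\text{sg}(s_{\phi_i}) - c_{\phi 2}\,s_{\phi_i}^2 \leq 0,
\]
since $c_{\phi 1}, c_{\phi 2} > 0$ and $s_{\phi_i}\,\text{sg}(s_{\phi_i}) \geq 0$ on every branch of the saturation function.

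I do not expect a genuine obstacle here. Unlike Theorem~\ref{theorem2}, the roll subsystem carries no explicit disturbance term and no RBFNN estimator, so there is no weight-error bookkeeping and no bounding inequality analogous to \eqref{eqn:ineq} to enforce. The only point deserving a line of justification is non-negativity of $s_{\phi_i}\,\text{sg}(s_{\phi_i})$: outside the boundary layer, where $|s_{\phi_i}| > \epsilon$, it equals $|s_{\phi_i}|$, and inside it equals $s_{\phi_i}^2/\epsilon$, both of which are non-negative. Invoking Lyapunov's stability theorem then closes the argument.
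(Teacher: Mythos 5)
Your proposal is correct and follows essentially the same route as the paper: the Lyapunov candidate $\tfrac{1}{2}s_{\phi_i}^2$, substitution of $\dot{s}_{\phi_i}$ from \eqref{eqn:dotsphi}, cancellation via the equivalent control, and the resulting $\dot{V}_{\phi_i}=-c_{\phi 1}s_{\phi_i}\text{sg}(s_{\phi_i})-c_{\phi 2}s_{\phi_i}^2\leq 0$. Your added remark verifying $s_{\phi_i}\text{sg}(s_{\phi_i})\geq 0$ on each branch of the saturation function is a small extra justification the paper leaves implicit, but the argument is otherwise identical.
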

\begin{proof}
The candidate Lyapunov function of the roll angle subsystem is chosen as follows:
\begin{equation}
    V_{\phi_i}=\dfrac{1}{2}s_{\phi_i}^2.
\end{equation}
Taking the first derivative of $V_\phi$ gives
\begin{equation}
    \dot{V}_{\phi_i}=s_{\phi_i}\dot{s}_{\phi_i}.
    \label{eqn:Vdotphi0}
\end{equation}
By substituting \eqref{eqn:dotsphi} into \eqref{eqn:Vdotphi0}, we have
\begin{equation}
    \dot{V}_{\phi_i}=s_{\phi_i}\left(\gamma_\phi\left(\dot{\phi}_i-{}^d\dot{\phi}_{i}\right)+\dfrac{\dot{\theta}_{i}\dot{\psi}_{i}\left(I_{y}-I_{z}\right)+\tau_{\phi_i}}{I_{x}}-{}^v\dot{\phi}_{i}\right).
    \label{eqn:Vdotphi}
\end{equation}
Finally, substituting \eqref{eqn:uphi} and \eqref{eqn:uphi0} into \eqref{eqn:Vdotphi} gives
\begin{equation}
    \dot{V}_{\phi_i}=-c_{\phi 1} s_{\phi_i}\text{sg}\left(s_{\phi_i}\right)-c_{\phi 2}s_{\phi_i}^2\leq0.
    \label{eqn:finalVdot}
\end{equation}
Thus, the Lyapunov stability of the roll angle control system is guaranteed.
\end{proof}

The control signals for the pitch and yaw angles can be obtained by applying the design process similar to the roll angle. As a result, the pitch control signals are obtained as: 
\begin{equation}
    \begin{aligned}
        \tau_{\theta_i}&={}^{eq}\tau_{\theta_i}+{}^{sw}\tau_{\theta_i}\\
        {}^{eq}\tau_{\theta_i}&=I_y\left({}^v\dot{\theta}_{i}-\gamma_\theta\left(\dot{\theta}_i-{}^d\dot{\theta}_{i}\right)\right)-\dot{\phi}_{i}\dot{\psi}_{i}\left(I_{z}-I_{x}\right)\\
        {}^{sw}\tau_{\theta_i}&=-I_y\left(c_{\theta 1}\text{sg}\left(s_{\theta_i}\right)+c_{\theta 2}s_{\theta_i}\right),
    \end{aligned}
    \label{eqn:utheta}
\end{equation}
and the yaw control signals are given by:
\begin{equation}
    \begin{aligned}
        \tau_{\psi_i}&={}^{eq}\tau_{\psi_i}+{}^{sw}\tau_{\psi_i}\\
        {}^{eq}\tau_{\psi_i}&=I_z\left({}^v\dot{\psi}_{i}-\gamma_\psi\left(\dot{\psi}_i-{}^d\dot{\psi}_{i}\right)\right)-\dot{\phi}_{i}\dot{\theta}_{i}\left(I_{x}-I_{y}\right)\\
        {}^{sw}\tau_{\psi_i}&=-I_z\left(c_{\psi 1}\text{sg}\left(s_{\psi_i}\right)+c_{\psi 2}s_{\psi_i}\right).
    \end{aligned}
    \label{eqn:upsi}
\end{equation}

\section{Results}\label{result}
\begin{table}
\begin{center}
\caption{Parameters of the Hummingbird quadrotor}
\label{tbl:uav_param}
\begin{tabular}{C{7.5cm} C{2.5cm} C{2.5cm}}
\hline
Description & Notation & Value      \\ \hline
\multicolumn{1}{l}{The mass of UAV (kg)} & $m$             & 0.68                     \\ 
\multicolumn{1}{l}{The gravitational acceleration (m/s$^2$)}       &    $g$             & 9.81               \\ 
\multicolumn{1}{l}{Moment of inertia about x and y axes (kg.m$^2$)} &   $I_x$, $I_y$    & 0.007         \\
\multicolumn{1}{l}{Moment of inertia about z-axis (kg.m$^2$)}   &   $I_z$           & 0.012           \\
\multicolumn{1}{l}{Length of UAV arm (m)}           &   $l$             & 0.17                    \\
\multicolumn{1}{l}{Thrust coefficient}             &    $k_t$           & 29$\times$10$^{-6}$       \\
\multicolumn{1}{l}{Drag coefficient}        &    $k_d$           & 1.1$\times$10$^{-6}$          \\ \hline
\end{tabular}
\end{center}
\end{table}

\begin{table}
\caption{Parameters of the proposed controller}
\label{tbl:control_param}
\centering
\begin{tabular}{m{4cm} m{9cm}}
\hline
\multicolumn{2}{c}{Parameters of the BSMC }                                 \\ \hline
Attitude controller & \begin{tabular}[c]{@{}l@{}}$\lambda_{\phi}=\lambda_{\theta}=\lambda_{\psi}=5$;\\ $\gamma_{\phi}=\gamma_{\theta}=\gamma_{\psi}=5$;\\ $c_{\phi 1}=c_{\theta 1}=c_{\psi 1}=2$;\\ $c_{\phi 2}=c_{\theta 2}=c_{\psi 2}=2$;\end{tabular} \\ \hline
Position controller & $\lambda_{\xi}=3$; $\gamma_\xi=2$; $c_{\xi1}=c_{\xi2}=2$                                  \\ \hline
\multicolumn{2}{c}{Parameters of the RBFNN}                                \\ \hline
Position controller & \begin{tabular}[c]{@{}l@{}}$m=50;a=0.1;b=10;\eta=0.2;$ \\ $\left\{ \begin{array}{c}
\mu_{x}=\text{linspace}\left(-r_{x},r_{x},n\right)\\
\mu_{y}=\text{linspace}\left(-r_{y},r_{y},n\right)\\
\mu_{z}=\text{linspace}\left(-r_{z},r_{z},n\right)
\end{array}\right.;r_{x}=r_{y}=r_{z}=15;$\\$\mu_{1}=\left[\mu_{x},\mu_{y},\mu_{z}\right]^{T};\mu_{2}=\mu_{1}$;\end{tabular}  \\ \hline
\end{tabular}
\end{table}

\begin{figure}
\centering
\includegraphics[width=0.25\textwidth]{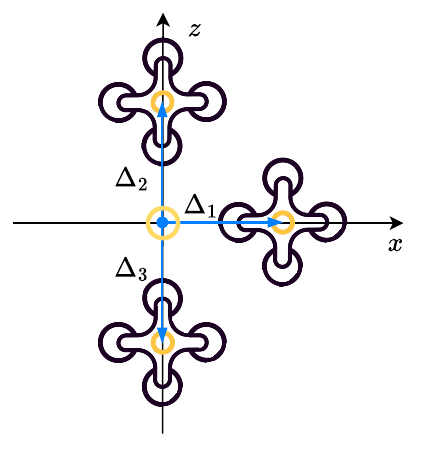}
\caption{The desired topology}
\label{fig:topology}
\end{figure}

\begin{figure}
    \centering
    \includegraphics[width=0.8\textwidth]{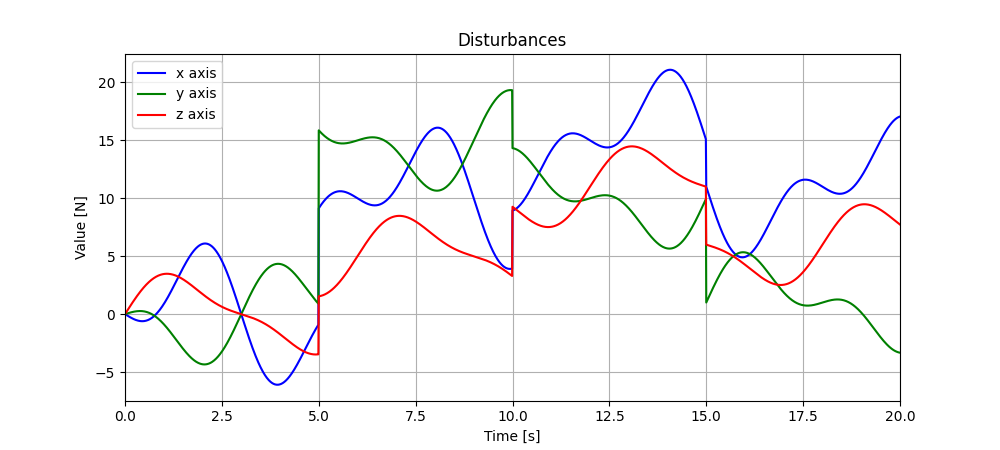}
    \caption{The external disturbance acting on the formation generated based on the rectangle and full wavelength “1-cosine” wind model in Scenario 1}
    \label{fig:ex_dis}
\end{figure}

\begin{figure}
\centering
    \begin{subfigure}[b]{0.45\textwidth}
    \centering
    \includegraphics[width=\textwidth]{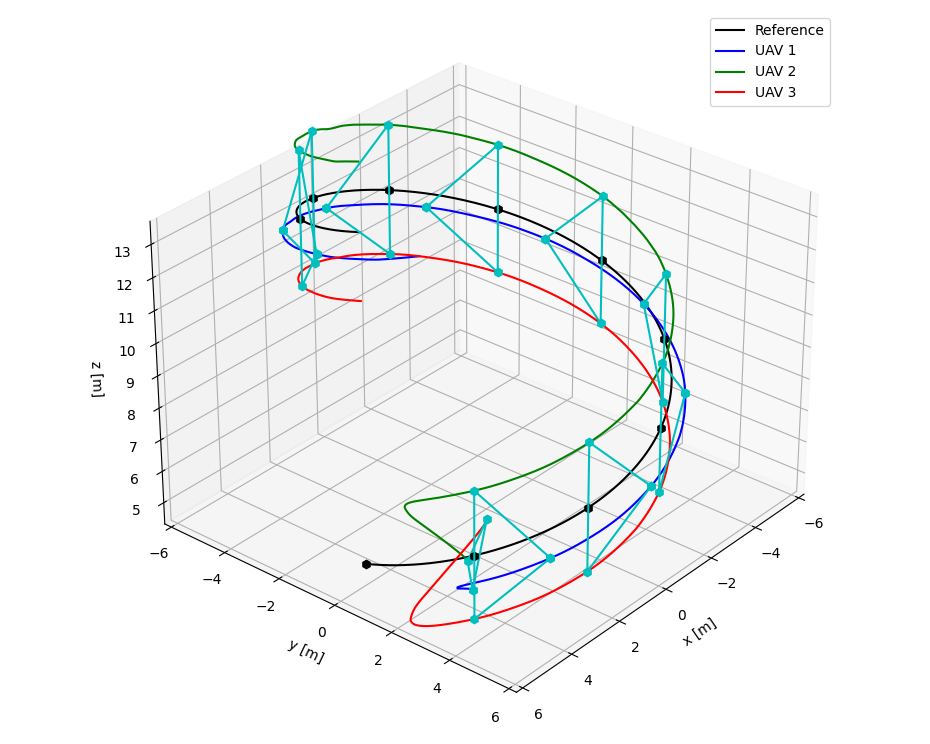}
    \caption{RBF-BSMC}
    \end{subfigure}
    \begin{subfigure}[b]{0.45\textwidth}
    \centering
    \includegraphics[width=\textwidth]{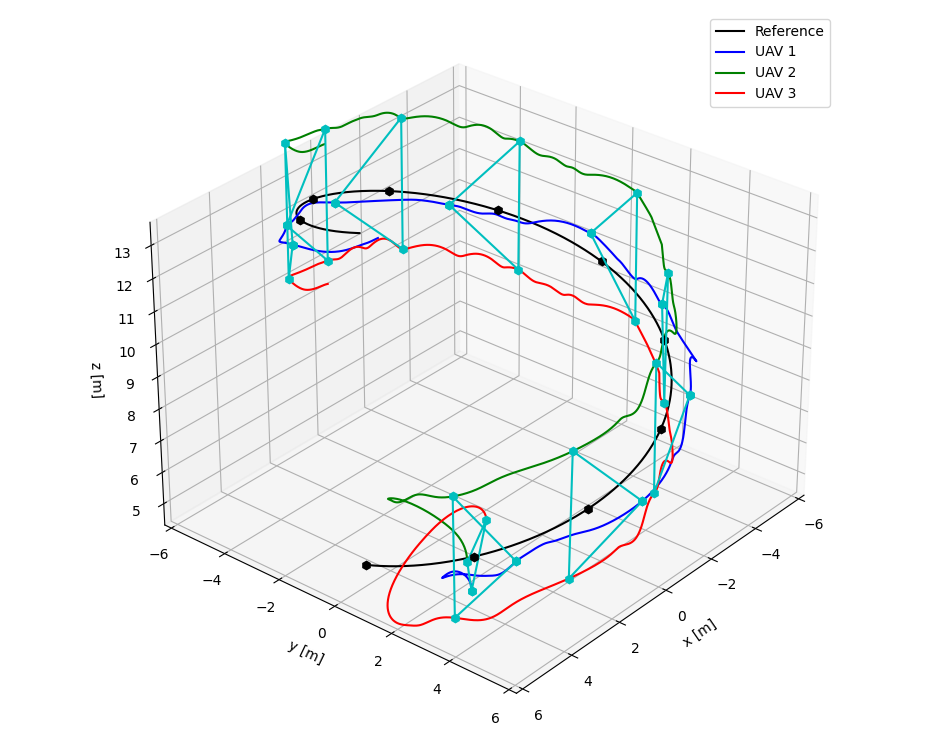}
    \caption{MPC}
    \end{subfigure}
    \begin{subfigure}[b]{0.45\textwidth}
    \centering
    \includegraphics[width=\textwidth]{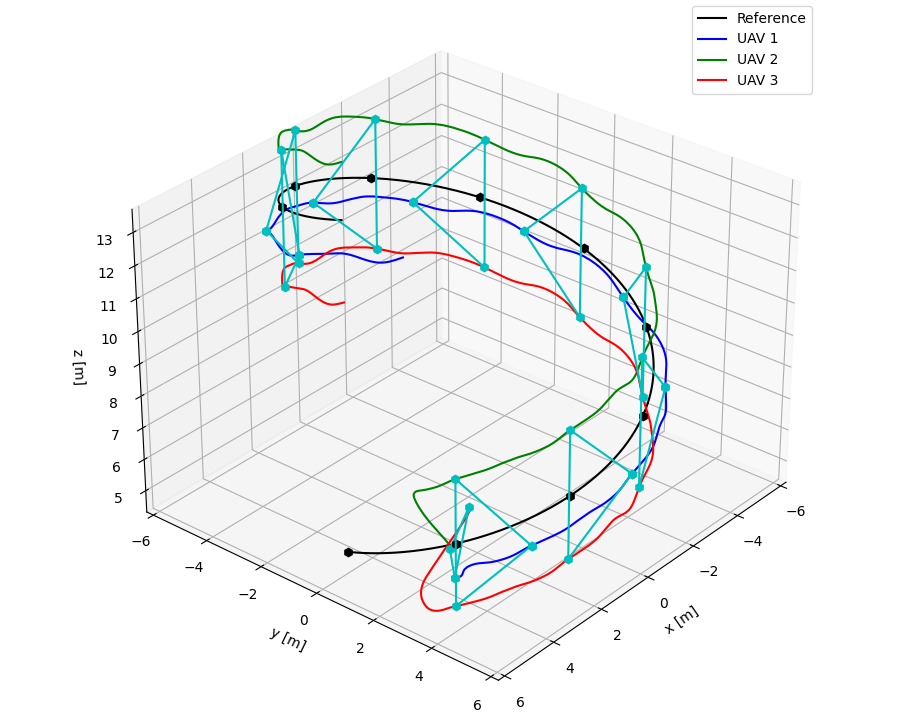}
    \caption{BSMC}
    \end{subfigure}
    \begin{subfigure}[b]{0.45\textwidth}
    \centering
    \includegraphics[width=\textwidth]{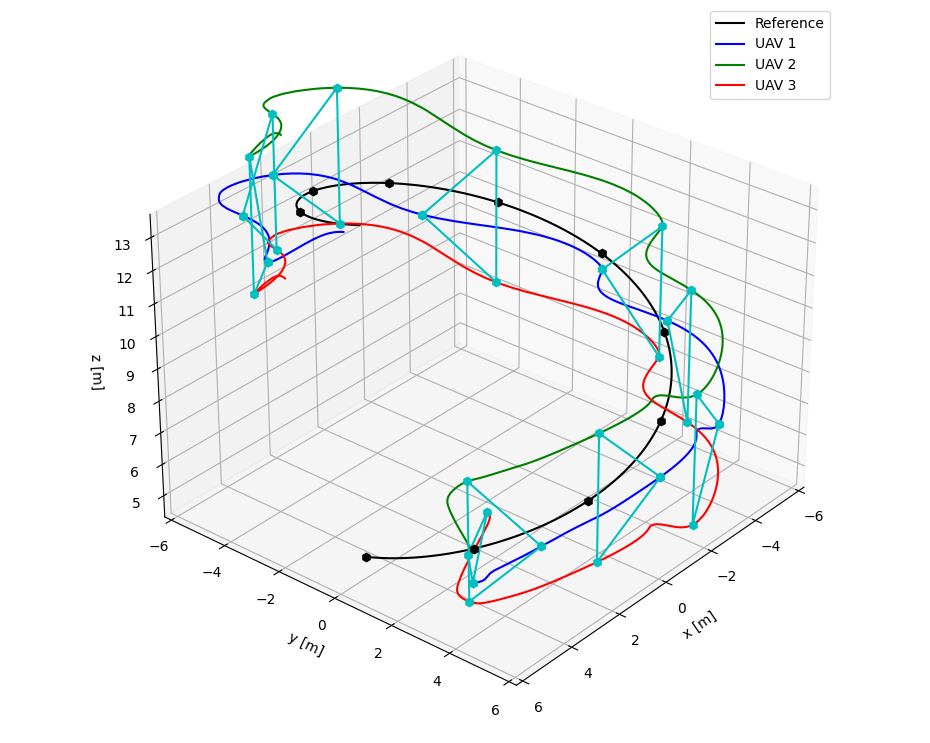}
    \caption{SMC}
    \end{subfigure}
    \caption{Trajectories of the UAV formation generated by the four controllers in Scenario 1}
    \label{fig:tracking}
\end{figure}

\begin{figure}
    \centering
    \begin{subfigure}[b]{0.49\textwidth}
    \centering
    \includegraphics[width=\textwidth]{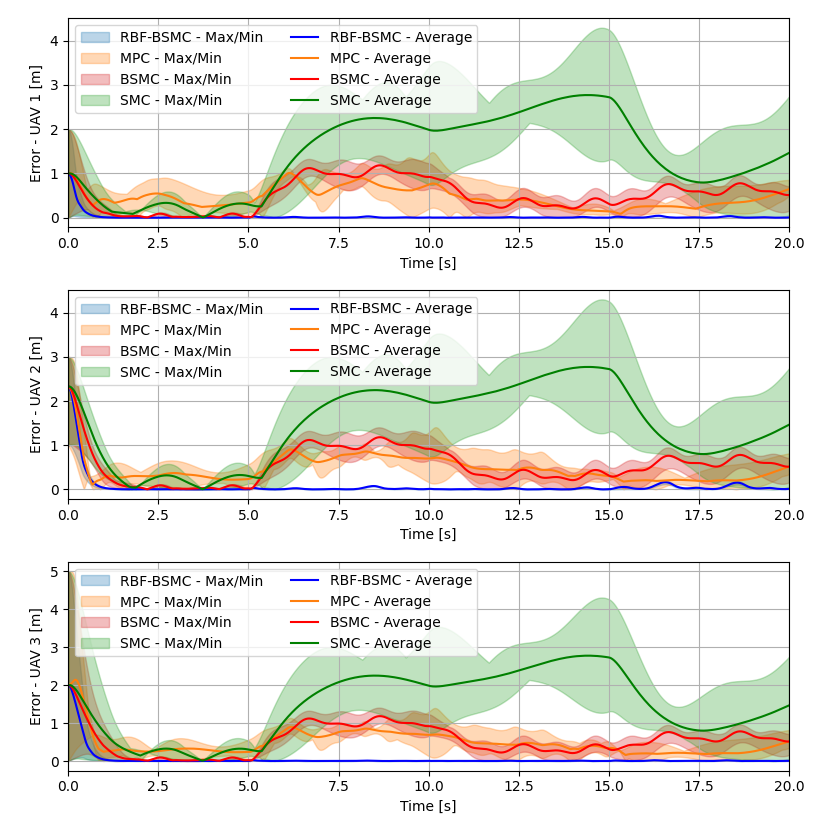}
    \caption{The tracking errors}
    \label{fig:error}
    \end{subfigure}
    \begin{subfigure}[b]{0.49\textwidth}
    \centering
    \includegraphics[width=\textwidth]{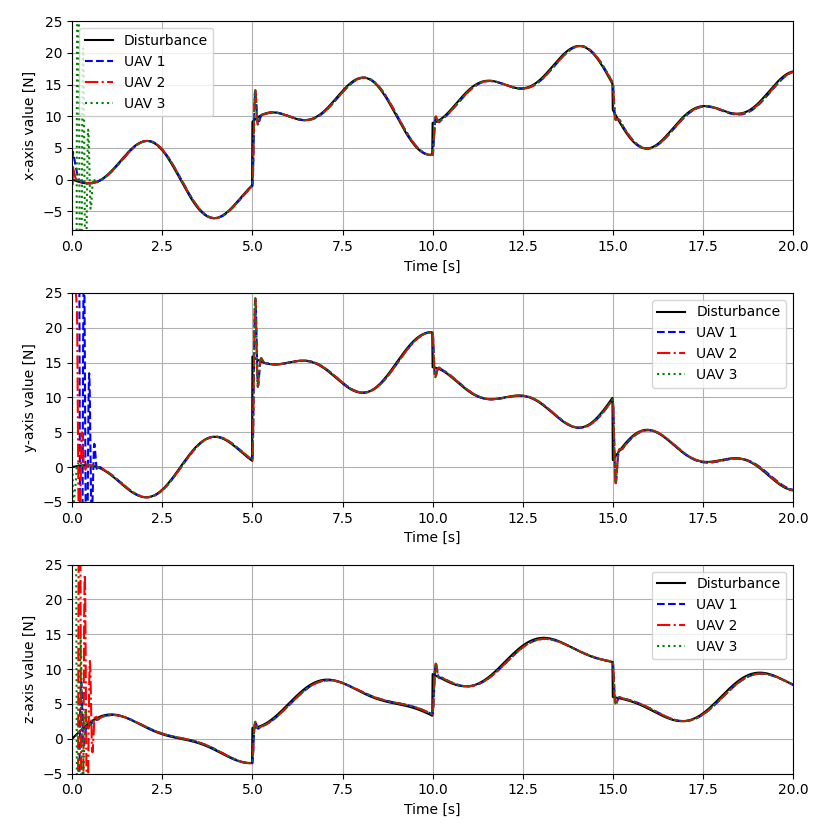}
    \caption{The estimated disturbances}
    \label{fig:est}
    \end{subfigure}
    \caption{The tracking errors and estimated disturbances of the controllers in Scenario 1}
    \label{fig:scen1}
\end{figure}

\begin{figure}
    \centering
    \includegraphics[width=0.8\textwidth]{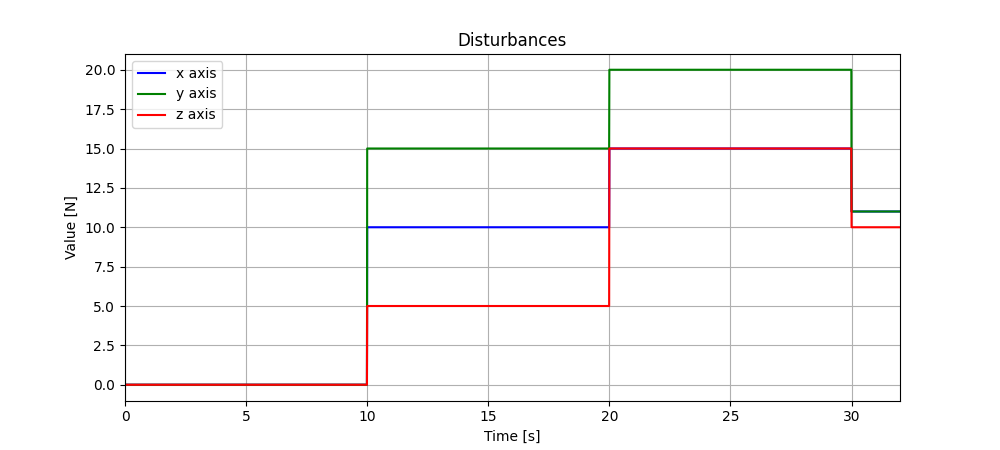}
    \caption{The external disturbance acting on the formation generated based on the rectangle wind model in Scenario 2}
    \label{fig:ex_dis2}
\end{figure}

\begin{figure}
\centering
    \begin{subfigure}[b]{0.49\textwidth}
    \centering
    \includegraphics[width=\textwidth]{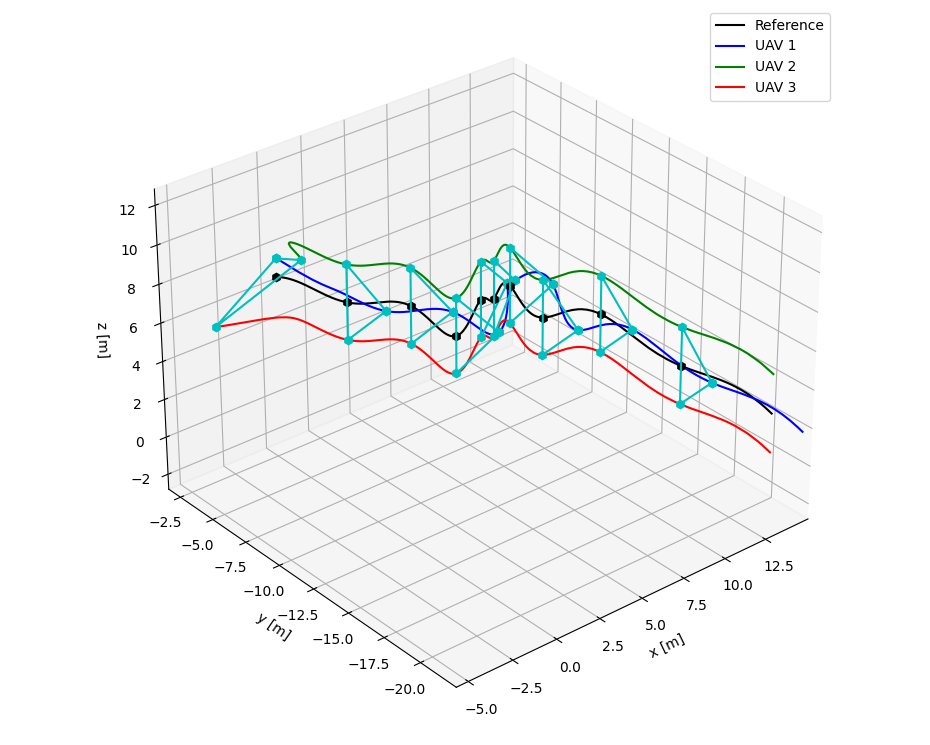}
    \caption{RBF-BSMC}
    \end{subfigure}
    \begin{subfigure}[b]{0.49\textwidth}
    \centering
    \includegraphics[width=\textwidth]{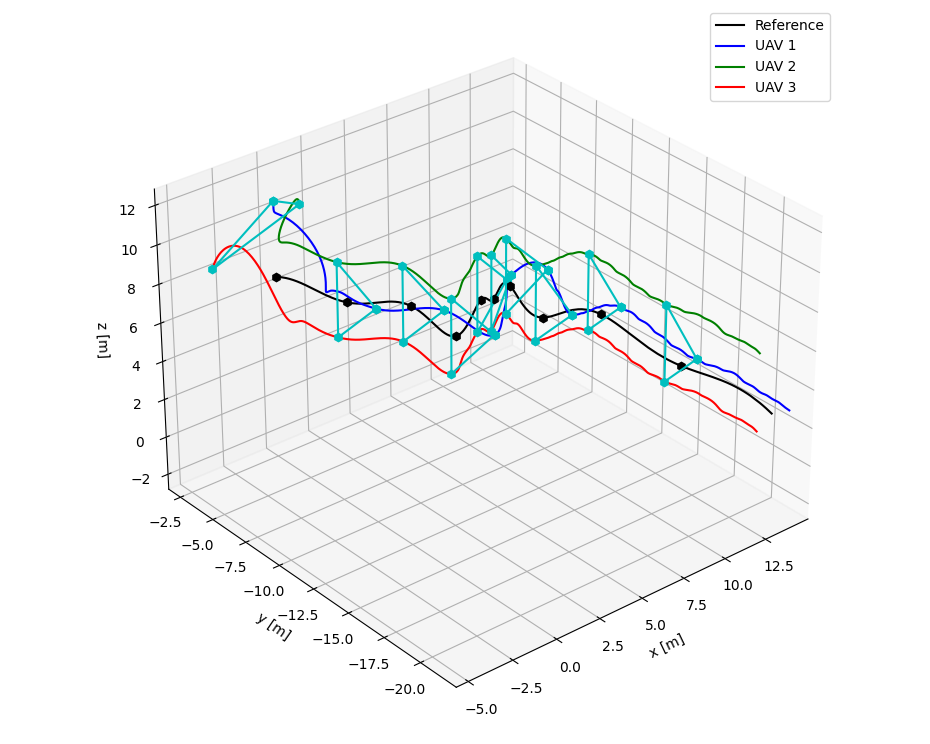}
    \caption{MPC}
    \end{subfigure}
    \begin{subfigure}[b]{0.49\textwidth}
    \centering
    \includegraphics[width=\textwidth]{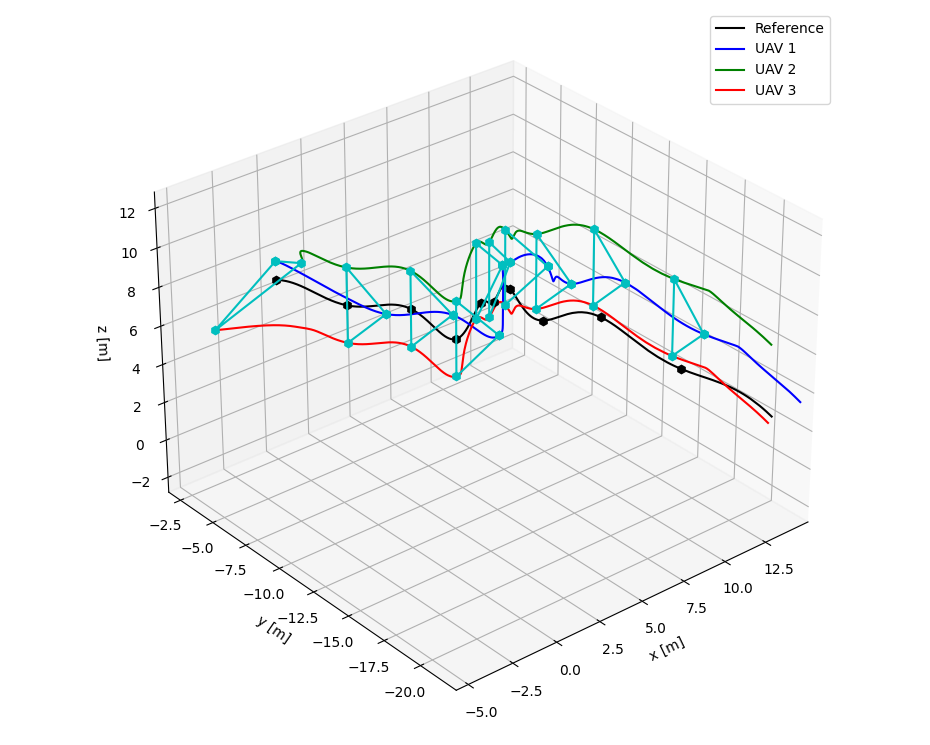}
    \caption{BSMC}
    \end{subfigure}
    \begin{subfigure}[b]{0.49\textwidth}
    \centering
    \includegraphics[width=\textwidth]{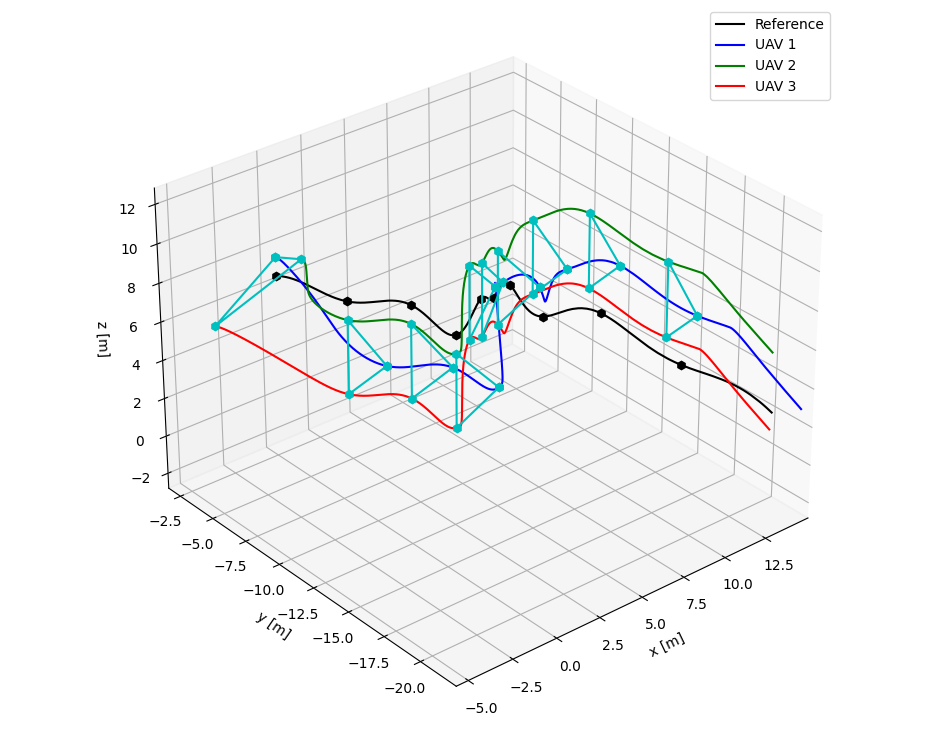}
    \caption{SMC}
    \end{subfigure}
    \caption{Trajectories of the UAV formation generated by the four controllers in Scenario 2}
    \label{fig:tracking2}
\end{figure}

\begin{figure}
    \centering
    \begin{subfigure}[b]{0.48\textwidth}
    \centering
    \includegraphics[width=\textwidth]{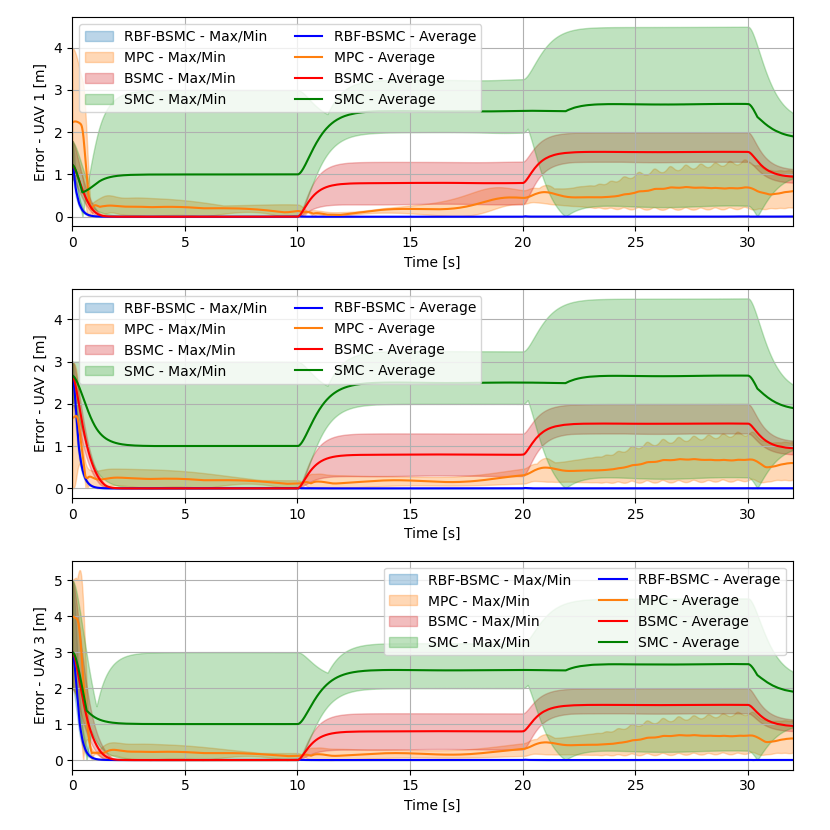}
    \caption{The tracking errors}
    \label{fig:error2}
    \end{subfigure}
    \begin{subfigure}[b]{0.48\textwidth}
    \centering
    \includegraphics[width=\textwidth]{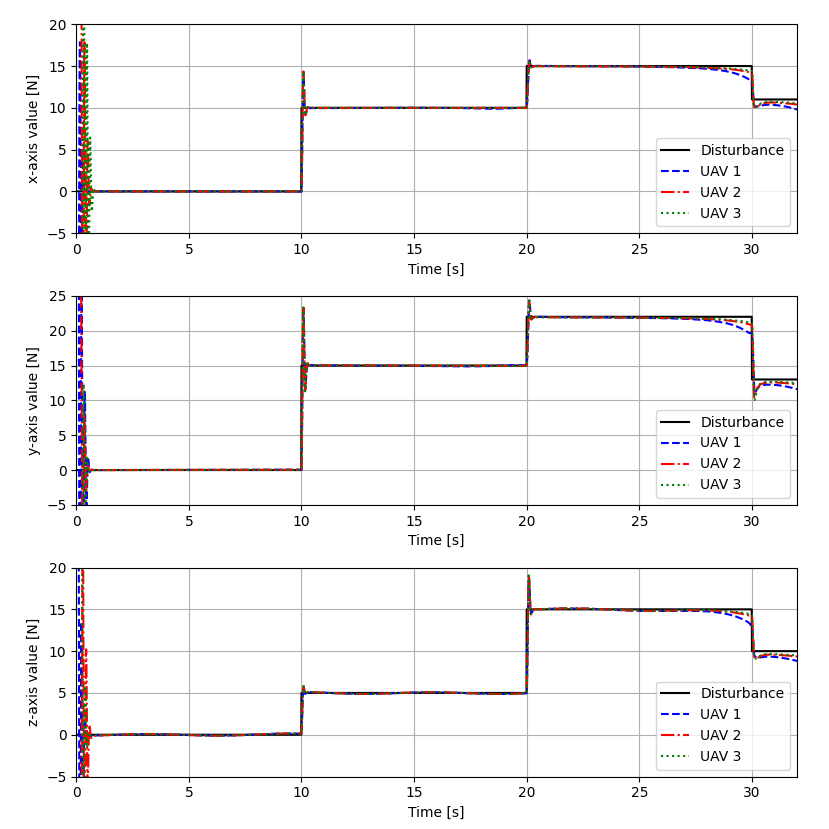}
    \caption{The estimated disturbances}
    \label{fig:est2}
    \end{subfigure}
    \caption{The tracking errors and estimated disturbances of the controllers in Scenario 2}
    \label{fig:scen2}
\end{figure}

\begin{figure}
\centering
\begin{subfigure}[b]{0.42\textwidth}
    \centering
    \frame{\includegraphics[width=\textwidth]{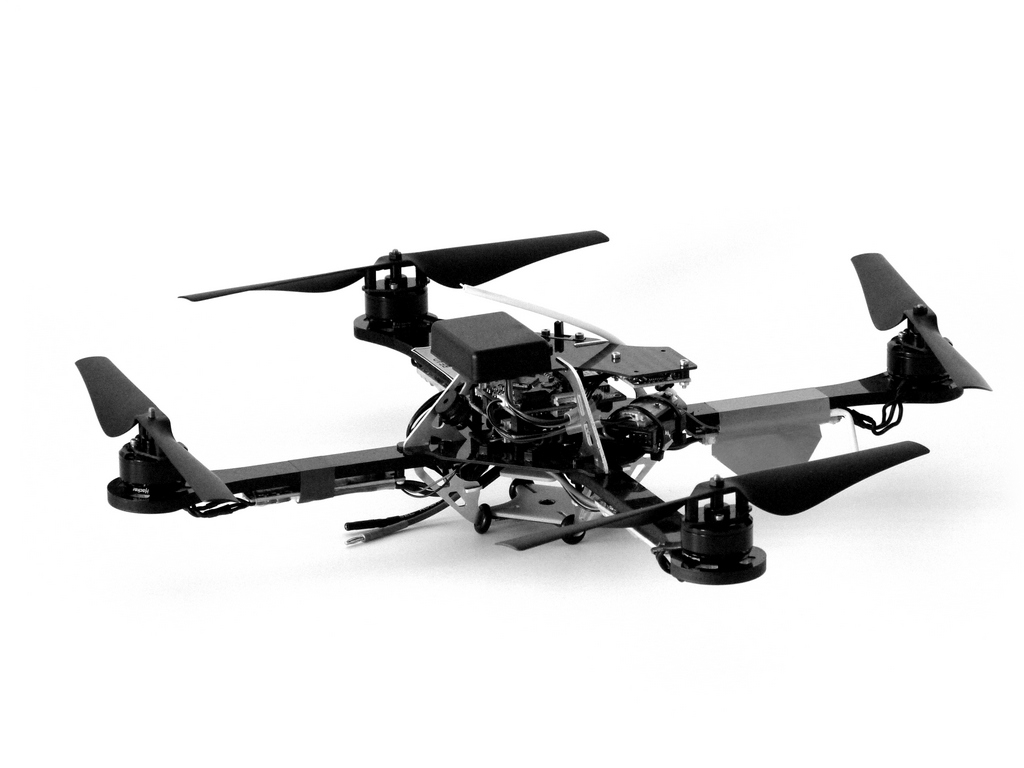}}
    \caption{The real Hummingbird drone}
    \label{fig:drone}
\end{subfigure}
\begin{subfigure}[b]{0.42\textwidth}
    \centering
    \includegraphics[width=\textwidth]{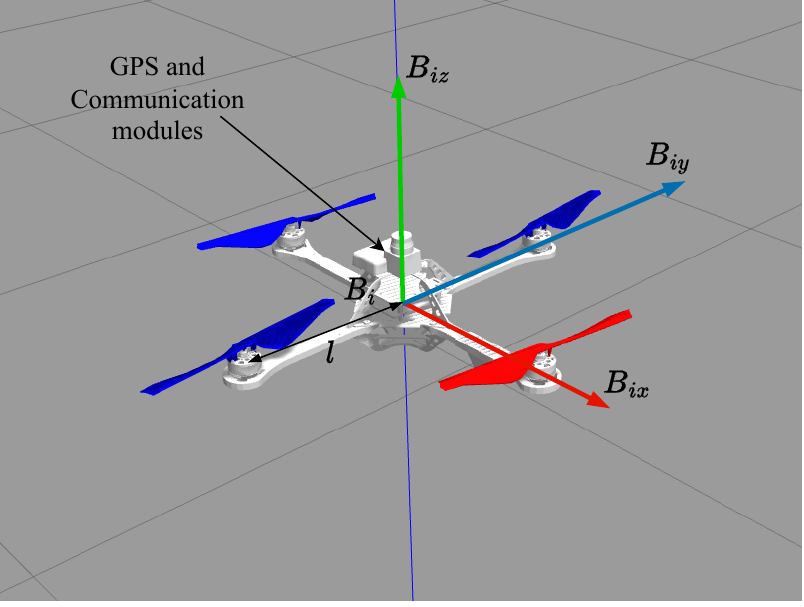}
    \caption{The Hummingbird model}
    \label{fig:used}
\end{subfigure}
\caption{The Hummingbird drone model used in validation \cite{Furrer2016,Bui2022}}
\label{fig:val}
\end{figure}

\begin{figure}
\centering
\begin{subfigure}[b]{0.42\textwidth}
    \centering
    \includegraphics[width=\textwidth]{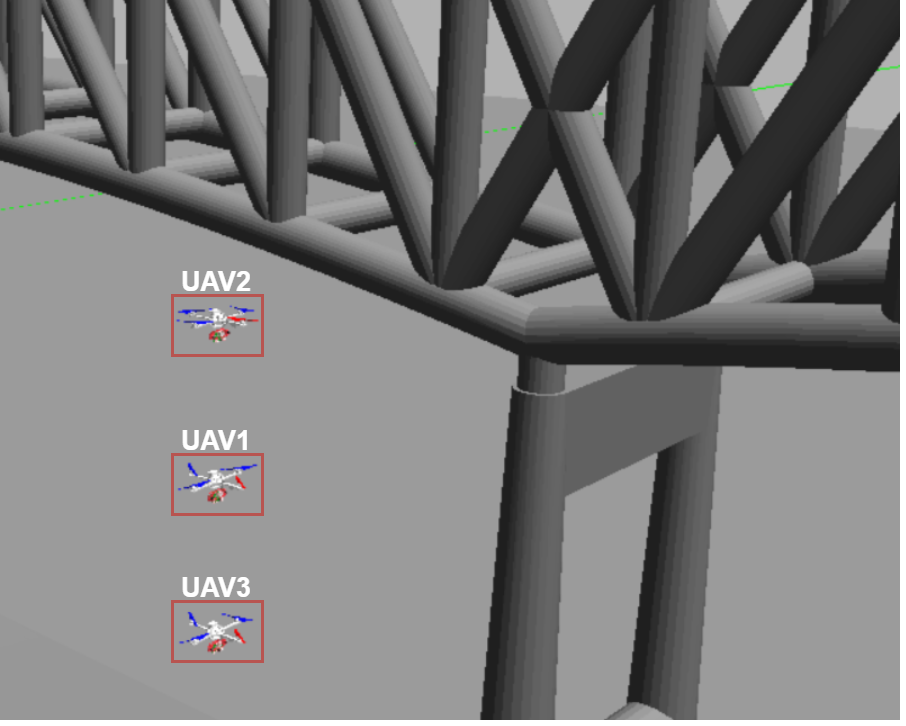}
    \caption{Vertical formation}
\end{subfigure}
\begin{subfigure}[b]{0.42\textwidth}
    \centering
    \includegraphics[width=\textwidth]{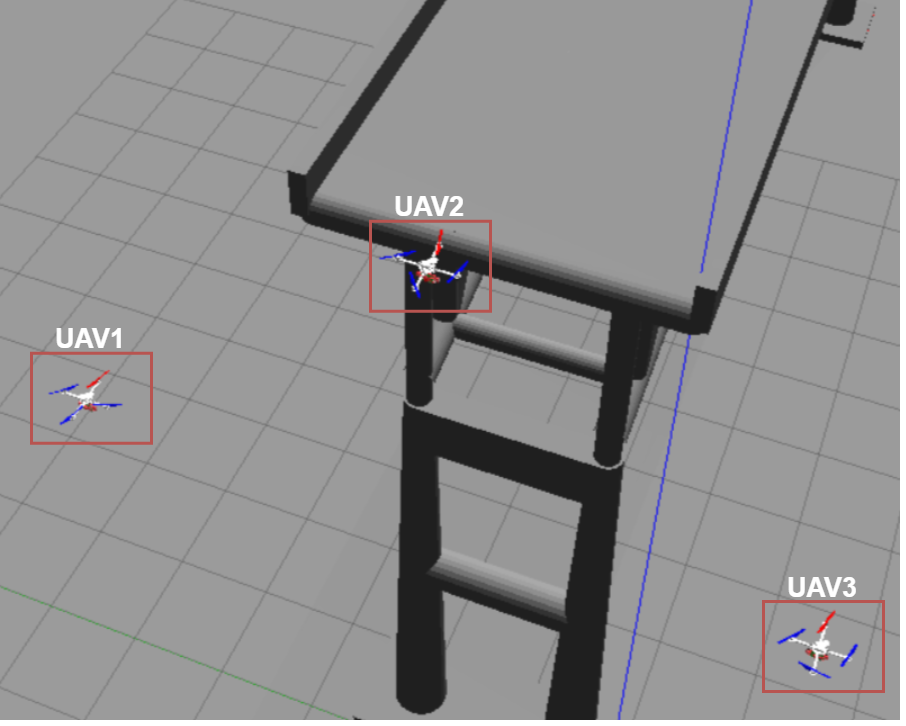}
    \caption{Triangular formation}
\end{subfigure}
\caption{The vertical and triangular formation topologies used in SIL tests}
\label{fig:val_form}
\end{figure}

\begin{figure}
\centering
    \begin{subfigure}[b]{0.9\textwidth}
    \centering
    \includegraphics[width=\textwidth]{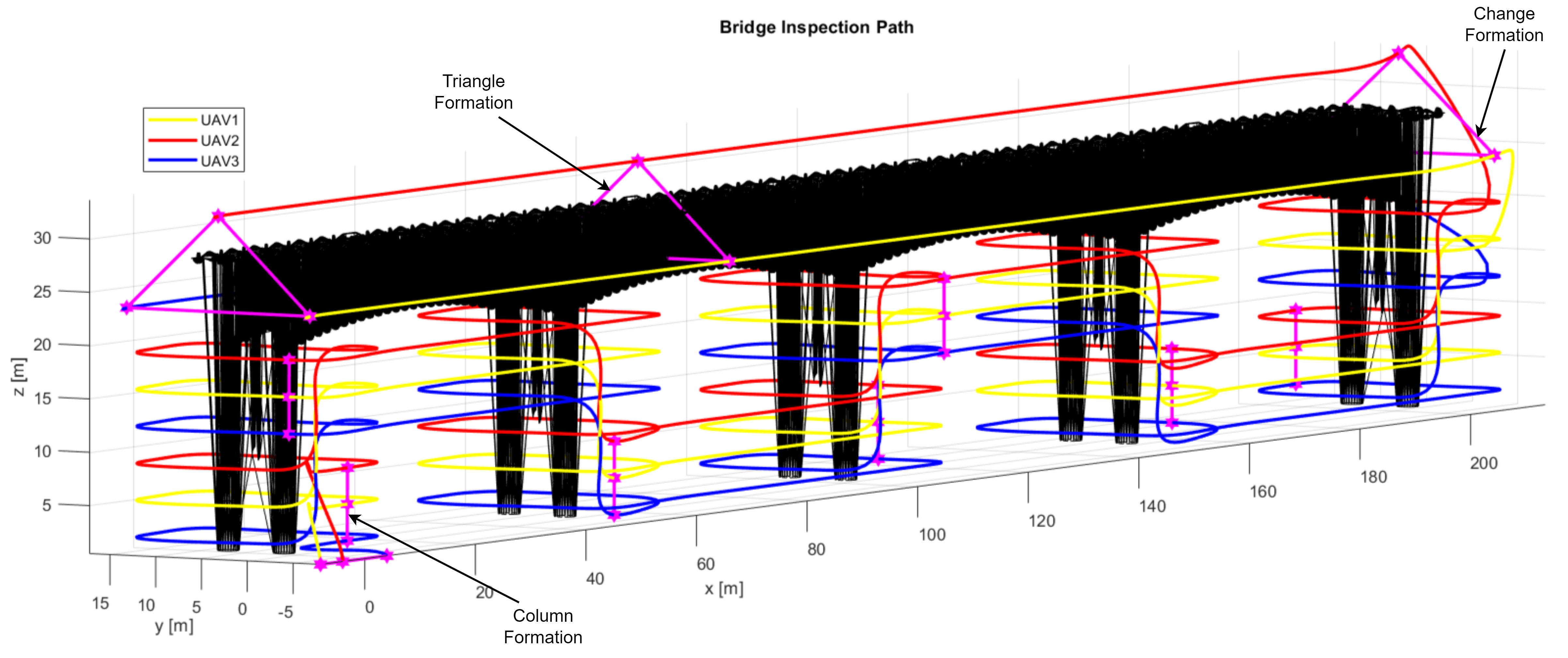}
    \caption{3D view}
    \label{fig:gazebo_side_a}
    \end{subfigure}
    \begin{subfigure}[b]{0.9\textwidth}
    \centering
    \includegraphics[width=\textwidth]{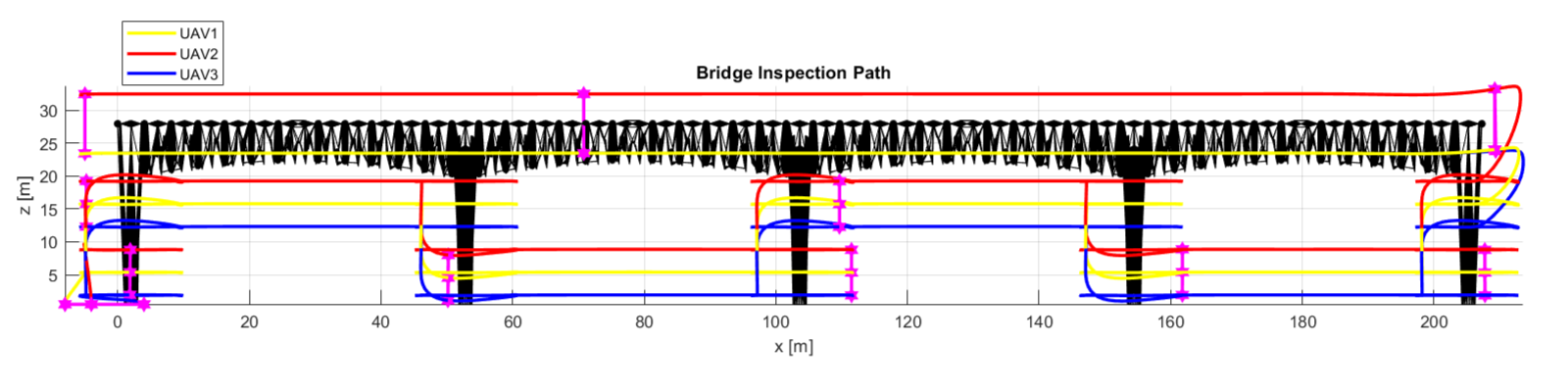}
    \caption{Side view}
    \label{fig:gazebo_side_b}
    \end{subfigure}
    \begin{subfigure}[b]{0.9\textwidth}
    \centering
    \includegraphics[width=\textwidth]{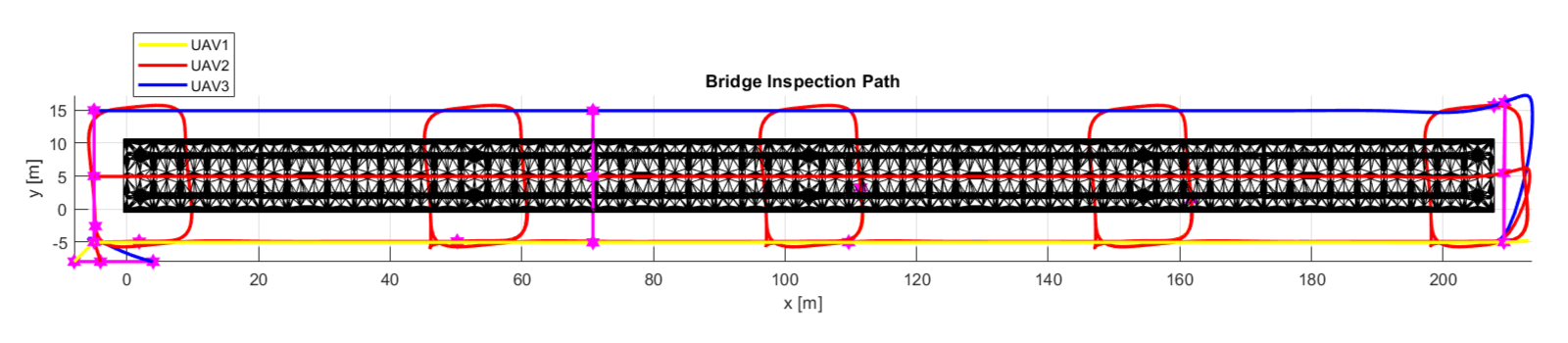}
    \caption{Top view}
    \label{fig:gazebo_side_c}
    \end{subfigure}
    \caption{The planned paths to inspect the bridge}
    \label{fig:gazebo_side}
\end{figure}

\begin{figure}
    \centering
    \includegraphics[width=0.9\textwidth]{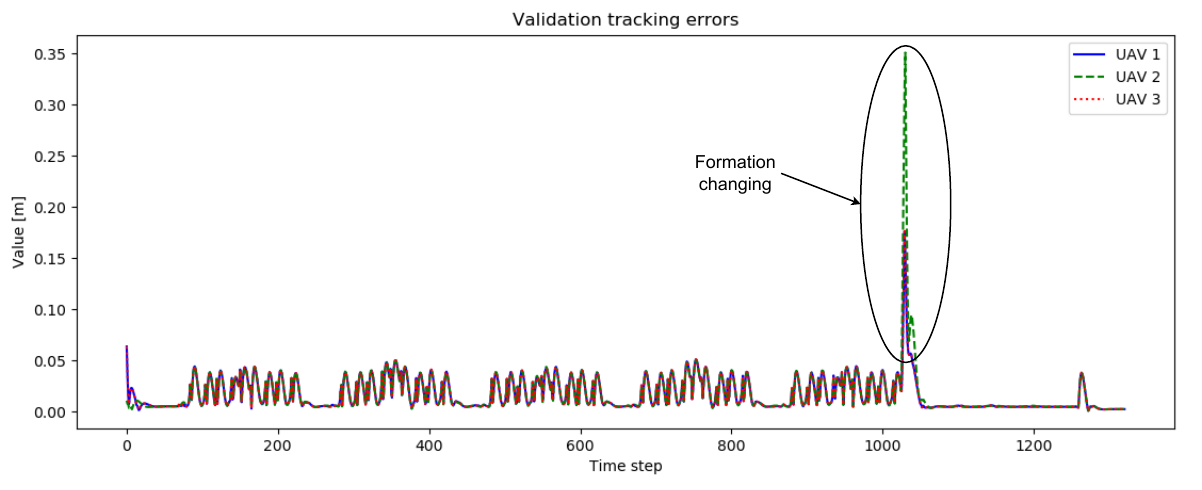}
    \caption{The tracking errors of UAV formation in the Gazebo SIL}
    \label{fig:sil_error}
\end{figure}

To evaluate the performance of the proposed control system, we have conducted a number of evaluations and comparisons\footnote{Evaluation results in Scenario 1 and Scenario 2 - \url{https://youtu.be/LYD7269n1-c}}. The UAV model used is the Hummingbird quadrotors \cite{Bui2022}, whose parameters are shown in Table \ref{tbl:uav_param}. Parameters of the position and attitude controllers are chosen as shown in Table \ref{tbl:control_param}. The desired formation is a triangular shape with $\Delta_1=\left[2,0,0\right]^T$, $\Delta_2=\left[0,0,2\right]^T$, and $\Delta_3=\left[0,0,-2\right]^T$, as depicted in Figure \ref{fig:topology}. Comparisons are conducted between the proposed controller (RBF-BSMC) and three other controllers namely model predictive control (MPC) \mbox{\cite{9329094,10064191}}, backstepping sliding mode control (BSMC) \mbox{\cite{9079172,8944049}} and sliding mode control (SMC) \mbox{\cite{4601469,9274320}} in different scenarios.

\subsection{Scenario 1}

In this scenario, external disturbances acting on the formation are generated based on the combination of the rectangle and full wavelength ``1-cosine'' wind model \cite{bo2019}, as shown in Figure \ref{fig:ex_dis}. The initial positions of the UAVs are set as $\xi_{1}=\left[3,2,4\right]^T$, $\xi_{2}=\left[2,1,4\right]^T$, and $\xi_{3}=\left[0,0,4\right]^T$. The desired trajectory of the virtual leader is a spiral with the $z$ coordinate increasing over time as expressed in (\ref{eq:spiral}).

\begin{equation}
    \begin{aligned}
    x_{L}&=5\cos\left(\dfrac{2\pi}{20}t\right)\text{ (m)},\\
    y_{L}&=5\sin\left(\dfrac{2\pi}{20}t\right)\text{ (m)},\\
    z_{L}&=0.5t+5\text{ (m)},\\
    \psi_{L}&=\dfrac{2\pi}{20}t+\dfrac{\pi}{2}\text{ (rad)}.
    \end{aligned}
    \label{eq:spiral}
\end{equation}

Figure \ref{fig:tracking} shows the 3D views of the trajectory tracking results of the UAV formation. It can be seen that the UAVs quickly reach the initial positions to form the desired shape. They then maintain the shape while following the reference trajectory. However, the trajectory of the proposed method is smoother and more accurate than the others due to its capability to estimate the disturbance via the RBFNN and use it as feedback to adjust the control signals. This result can be further verified via the tracking errors as shown in Figure \mbox{\ref{fig:error}}. It can be seen that the average tracking errors of the proposed controller quickly converge to zeros, whereas those errors of the other controllers largely fluctuate due to disturbances. In addition, the maximum and minimum tracking errors of the UAVs are also very small with our method, which confirm its stability for formation control.

Figure \ref{fig:est} shows the disturbances estimated for each UAV by the proposed controller. After the transition period, the estimation starts to converge to the real disturbance. This provides feedback for the controller to adjust the control signal for better tracking performance.

\subsection{Scenario 2}

In this scenario, the rectangle wind model is used to generate external disturbances as shown in Figure \ref{fig:ex_dis2}. The initial positions of the UAVs are set as $\xi_{1}=\left[2,0,9\right]^T$, $\xi_{2}=\left[5,2,7\right]^T$, and $\xi_{3}=\left[-3,-2,8\right]^T$. The desired trajectory of the virtual leader is generated based on an inspection path used to collect surface data \cite{PHUNG201725}.

The 3D of the UAV formation tracking results are shown in Figure \ref{fig:tracking2}. It can be seen that all controllers are able to drive the UAVs to reach their reference positions and then track them to form the desired shape. The proposed controller, however, introduces smaller tracking errors than the other controllers, as shown in Figure \ref{fig:error2}, due to its disturbance estimator. As shown in Figure \ref{fig:est2}, the estimation closely follows the actual disturbances except for the positions where step changes happen. At those positions, a transition period is needed for the estimator to converge to the new steady state. However, the settling time of the estimator is short allowing it to provide timely feedback to the controller.

\subsection{Validation with software-in-the-loop tests}

To further validate the proposed control system, we have carried out software-in-the-loop (SIL) tests that involve the inspection of a scaled-down 3D model of a real bridge with 5 columns, as shown in Figure \ref{fig:gazebo_side}. The UAV model used is a Hummingbird quadrotor\footnote{Source code used for Gazebo validation - \url{https://github.com/duynamrcv/hummingbird_simulator}} developed based on Gazebo-based RotorS simulator \cite{Furrer2016}, as depicted in Figure \ref{fig:val}. The formation used includes two topologies, vertical and triangular shapes, as shown in Figure \ref{fig:val_form}. According to our previous work \mbox{\cite{Bui2024}}, the generated path to inspect the bridge includes two stages. The first stage covers all columns of the bridge using a vertical formation. The second one uses a triangular formation to cover the side and top surfaces of the bridge, as depicted in Figure \ref{fig:gazebo_side}.

Given the planned paths, the UAVs start to fly from positions $\left[-8,-8,0\right]^T$, $\left[-4,-8,0\right]^T$, $\left[4,-8,0\right]^T$, and reach their initial positions to form a vertical formation as shown in Figure \ref{fig:gazebo_side_a}. The formation then tracks the planned path to acquire surface images of the bridge\footnote{SIL validation - \url{https://youtu.be/1yUCzWRDcp0}}. Figure \ref{fig:sil_error} shows the tracking errors of the UAVs during operation. It can be seen that the errors quickly converge to small values in both inspection stages except between time steps 1070 and 1120, where there is a change in the formation topology. The errors in the first stage are slightly larger than in the second one as the UAVs frequently changes their direction to navigate around each column of the bridge. Nevertheless, the average tracking error of less than 5 cm is sufficient for most UAV-related applications and thus confirms the validity of our approach.
\section{Conclusion}\label{con}
In this paper, we have presented a robust control system using RBFNN for a group of UAVs flying in a formation. By combining BSC with SMC, the controller can handle nonlinearity to increase its control performance. The use of RBFNN enables the system to estimate external disturbances to enhance its control robustness. By using Lyapunov's theorem, we proved that the control system is stable and the proposed controller can track the reference trajectory. Evaluation results show that the proposed controller outperforms the state-of-the-art BSMC in terms of accuracy and robustness and is sufficient for most UAV applications.

\begin{con}
\ctitle{Author Contributions}
Duy-Nam Bui: Conceptualization, Methodology, Implementation, Writing - original draft. Manh Duong Phung: Investigation, Conceptualization, Supervision, Writing - review and editing.

\ctitle{Financial Support}
Duy-Nam Bui was funded by the Master, PhD Scholarship Programme of Vingroup Innovation Foundation (VINIF), code VINIF.2022.ThS.057.

\ctitle{Conflicts of Interest}
The authors declare no conflicts of interest exist.
\end{con}

\bibliographystyle{ieeetr}
\bibliography{bibfile}
\end{document}